\def\eqref#1{equation~\ref{#1}}
\def\1{\bm{1}}
\DeclareMathAlphabet{\mathsfit}{\encodingdefault}{\sfdefault}{m}{sl}
\SetMathAlphabet{\mathsfit}{bold}{\encodingdefault}{\sfdefault}{bx}{n}
\newtheorem{thm}{Theorem}
\definecolor{cornellred}{rgb}{0.7, 0.11, 0.11}
\definecolor{cadmiumgreen}{rgb}{0.0, 0.42, 0.24}
\definecolor{Blue}{rgb}{0.098,0.3,0.9}
\title{When to Ensemble: Identifying Token-Level Points for Stable and Fast LLM Ensembling}
\author{
  Heecheol Yun\textsuperscript{1}, 
  Kwangmin Ki\textsuperscript{2}, 
  Junghyun Lee\textsuperscript{1}, 
  Eunho Yang\textsuperscript{1,3} \\
  \textsuperscript{1}KAIST, 
  \textsuperscript{2}Korea University, 
  \textsuperscript{3}AITRICS \\
  \texttt{\{yoon6503, eunhoy\}@kaist.ac.kr}
}
\begin{document}

\maketitle

\begin{abstract}
Ensembling Large Language Models (LLMs) has gained attention as a promising approach to surpass the performance of individual models by leveraging their complementary strengths. In particular, aggregating models’ next-token probability distributions to select the next token has been shown to be effective in various tasks. However, while successful for short-form answers, its application to long-form generation remains underexplored. In this paper, we show that using existing ensemble methods in long-form generation requires a careful choice of ensembling positions, since the standard practice of ensembling at every token often degrades performance. We identify two key factors for determining the ensembling positions: tokenization mismatch across models and consensus in their next-token probability distributions. Based on this, we propose \textbf{SAFE}, (\textbf{S}table \textbf{A}nd \textbf{F}ast LLM \textbf{E}nsembling), a framework that selectively ensembles by jointly considering these factors. To further improve stability, we apply a probability sharpening strategy when the ensemble distribution becomes overly smooth, enabling the selection of more confident tokens during ensembling. Our experiments on diverse benchmarks, including MATH500 and BBH, demonstrate that SAFE outperforms existing methods in both accuracy and efficiency, with gains achieved even when ensembling fewer than 1\% of tokens. The code is available at \url{https://github.com/yoon6503/SAFE}.
\end{abstract}

\section{Introduction}
\label{sec:intro}
Recently, Large Language Models (LLMs) have achieved remarkable performance across diverse domains, including mathematics~\citep{qwen2.5math}, coding~\citep{deepseekcoder} and reasoning~\citep{yang2025qwen3,gpto3}.
Despite this progress, each LLM possesses unique strengths shaped by its training recipe, and no single model dominates across all domains. As a result, combining the complementary strengths of multiple models at inference time has emerged as a promising way to surpass the performance of any individual model~\citep{wang2025mixtureofagents, unite, chen2025symbolic}. Compared to training a new model that jointly integrates all such capabilities, these collaborative approaches provide a more practical and efficient pathway to superior performance.

Among various training-free collaboration methods, \textit{probability-level ensemble}, which aggregates the next-token probability distributions of multiple LLMs to select the most confident token, has emerged as one of the most effective ways~\citep{unite,GaC,deepen,eva}. It enables collaboration across diverse model architectures and effectively leverages the knowledge of multiple models embedded in their probability distributions. Consequently, it has outperformed individual models, particularly when directly answering multiple-choice or short-answer questions without reasoning.

\begin{figure}[t]
  \centering
  \includegraphics[width=\linewidth]{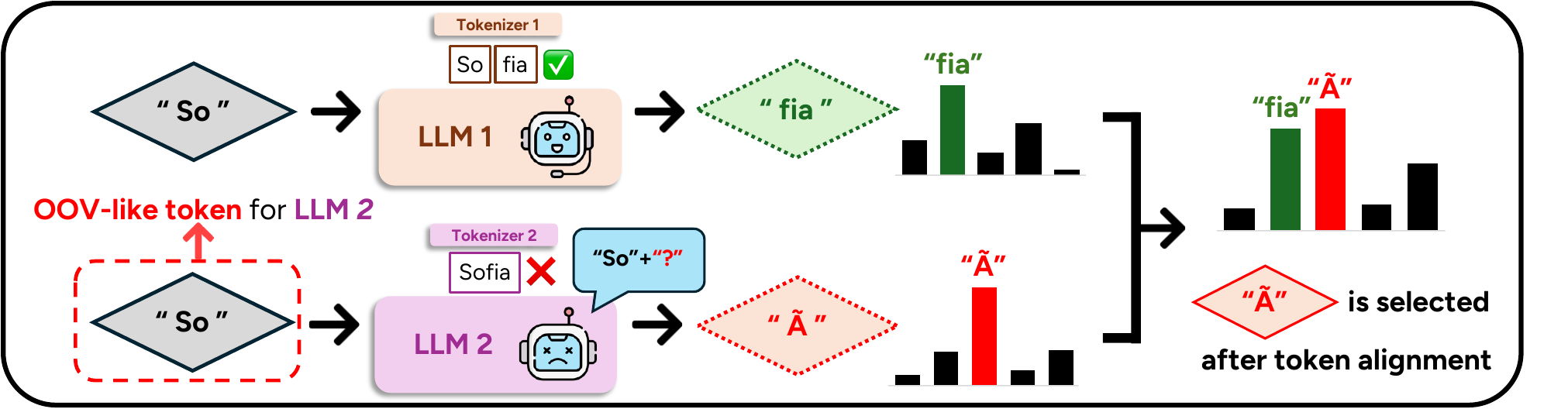}
  \vspace{-15pt}
  \caption{Illustration of the OOV-like token problem. When \texttt{So} is fed into $LLM_2$, which tokenizes \texttt{Sofia} as a single token, the probability distribution of the next token becomes corrupted.}
  \label{fig:oov-like}
\end{figure}

\begin{figure}[t]
  \centering
\includegraphics[width=\linewidth]{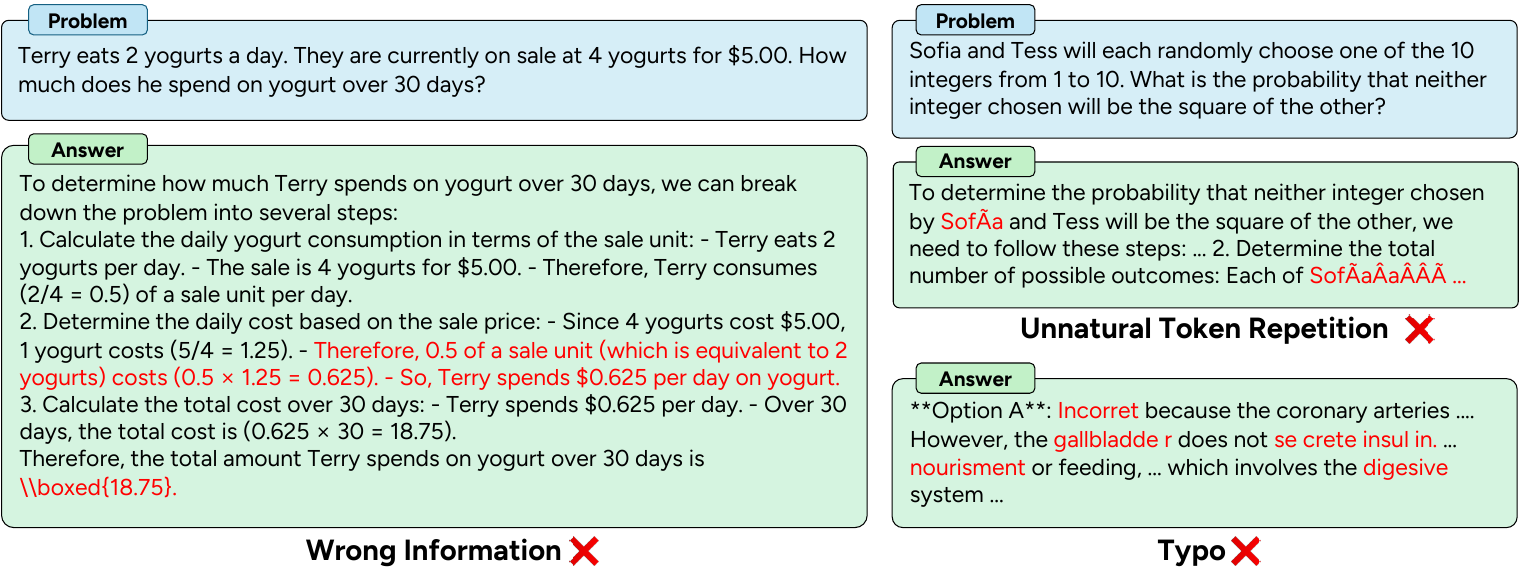}
  \vspace{-20pt}
  \caption{(Left) Failure cases of existing ensemble methods in long sequence generation. (Right) Feeding OOV-like tokens into a model often causes it to produce wrong tokens.}
  \vspace{-15pt}
  \label{fig:failure cases}
\end{figure}

A natural question then arises: \textit{are probability-level ensemble methods equally effective for long-form generation?} We find that, in long-form generation, the effectiveness of ensembling critically depends on deciding \textit{when} to ensemble. Our analysis reveals that accuracy and efficiency improve when ensembling occurs at appropriate token positions, guided by two key factors: \textbf{tokenization mismatch} across models and their \textbf{consensus in next-token probability distributions}.

The first factor, \textbf{tokenization mismatch across models}, is crucial for \textit{stability}, especially in long-form generation where such mismatches occur more frequently. A mismatch arises when an ensemble selects a token that conflicts with the tokenization scheme of a participating model. We refer to these tokens as OOV-like tokens because while not truly out-of-vocabulary (OOV), they force a model to predict the next token in an out-of-distribution state, often resulting in the generation of erroneous tokens, as illustrated in \Cref{fig:oov-like}. Suppose the ensemble process first generates the token \texttt{So} when constructing the word \texttt{Sofia}. However, since $LLM_2$ tokenizes \texttt{Sofia} as a single token, \texttt{So} acts as an OOV-like token for $LLM_2$. Conditioning on this unnatural prefix corrupts $LLM_2$'s next-token probability distribution, leading to an erroneous output (\emph{i.e.}, Ã). Such errors accumulate in long-sequence generation, degrading output quality. For example, in the case of unnatural token repetition shown in \Cref{fig:failure cases}, an initial error in generating the word \texttt{Sofia} propagates, causing the model to repeatedly output corrupted tokens like “Ã” on the subsequent generation. Consequently, existing ensemble method~\citep{unite} that performs ensembling at every token, suffers substantial performance degradation in Chain-of-Thought (CoT) reasoning~\citep{cot}, as shown in \Cref{tab:cot_failure}. Therefore, accounting for tokenization mismatch across models is essential to prevent the introduction of OOV-like tokens and ensure stable ensembling under CoT.

The second factor, \textbf{consensus in models' next-token probability distributions}, relates to \textit{efficiency}. Given the next-token probability distributions from multiple models, an ensemble operation is performed to aggregate these probability distributions. However, this introduces inefficiency when generating long sequences because the number of ensemble operations grows with sequence length. The primary expense of the ensemble operation arises from aligning next-token probability distributions defined over different vocabularies into a shared vocabulary space, a process that requires mapping across large vocabulary sets. However, when individual models’ next-token probability distributions exhibit sufficient consensus, the most confident token from the aggregated distribution can be identified without explicitly aligning distributions from multiple models. Leveraging this property, we can determine the most confident token directly from the models' next-token probability distributions, thereby improving efficiency by skipping alignment operations.

\begin{table}[t]
  \centering
  \begin{tabular}{@{}lcc|cc|c@{}}
    \toprule
    \multirow{2}{*}{{\textbf{Method}}} & \multicolumn{2}{c}{MMLU-redux} & \multicolumn{2}{c}{ARC-C} & MATH500\\
    \cmidrule{2-6}
    & \textbf{No CoT} & \textbf{CoT} & \textbf{No CoT} & \textbf{CoT} & \textbf{CoT}  \\
    \midrule
    Qwen2.5-7B & 68.86 & 74.88 & 87.37 & 88.74 & 72.4  \\
    Internlm3-8B & 67.52 & 76.89 & 88.57 & 90.27 & 74.8 \\
    \cmidrule{1-6}
 UniTE & 69.36 \textcolor{red}{(+0.50)} & 73.39 \textcolor{blue}{(-3.50)} & 88.40 \textcolor{blue}{(-0.17)} & 87.97 \textcolor{blue}{(-2.30)} & 59.6 \textcolor{blue}{(-15.2)} \\
    \textbf{UniTE + Ours} & 69.36 \textcolor{red}{(+0.50)} & \textbf{77.92} \textcolor{red}{(+1.03)} & 88.40 \textcolor{blue}{(-0.17)} & \textbf{90.78} \textcolor{red}{(+0.51)} & \textbf{77.6} \textcolor{red}{(+2.8)} \\
    \bottomrule
  \end{tabular}
  \vspace{-5pt}
  \caption{Performance of the baseline ensemble method (UniTE) degrades under CoT prompting. In contrast, it matches or outperforms individual models when directly answering multiple-choice questions, since tokenizer mismatches do not arise. All models are instruction-tuned.}
  \label{tab:cot_failure}
  \vspace{-10pt}
\end{table}

To this end, we propose \textbf{SAFE} (\textbf{S}table \textbf{A}nd \textbf{F}ast LLM \textbf{E}nsembling), which identifies the opportune moments for ensembling in long-sequence generation by considering the two key factors above. SAFE adopts a speculative strategy in which one model, the \textit{drafter}, generates a lookahead sequence of tokens, while the remaining models, the \textit{verifiers}, identify token-level ensemble points within that sequence. Similar to speculative decoding~\citep{leviathan2023fast}, this role separation reduces computational cost by limiting autoregressive generation to the drafter, whereas conventional ensemble methods require every model to do so. Specifically, SAFE iterates a three-step cycle: Generate–Verify–Ensemble. (\textbf{Generate}) First, the drafter produces a lookahead sequence of tokens. (\textbf{Verify}) Next, the verifiers examine drafter's tokens in a single forward pass to determine whether ensembling at each token is both stable and necessary. Ensembling is triggered among the drafter's tokens only when the following two conditions are satisfied: (i) OOV-like token is not introduced and (ii) the verifiers
exhibit insufficient agreement on the token. (\textbf{Ensemble}) Finally, ensembling is applied only at the tokens validated in the \textbf{Verify} step, replacing them with the ensembled tokens. At these points, if the ensemble distribution is overly smooth, we apply a probability sharpening strategy that concentrates the probability mass onto the most plausible token for precise token selection.

Overall, we find that probability-level ensembling should occur at appropriate token positions, especially when generating long sequences with models that use different tokenizers. We then propose SAFE, a method that determines these positions by jointly considering the two key factors. Consequently, our method offers the following key advantages.
\begin{itemize}
    \item \textbf{Efficiency}: SAFE significantly reduces computational cost in two ways. First, its speculative strategy restricts costly autoregressive generation to a single drafter. Second, its selective ensembling reduces the number of ensemble operations. Therefore, SAFE can achieve inference speed comparable to individual models, even on long sequences.

    \item \textbf{Stability}: SAFE ensures that tokens are generated from an uncorrupted ensemble distribution by preventing OOV-like tokens from being fed into models. As a result, SAFE enables stable text generation and outperforms existing ensemble methods in CoT settings.
    
    \item \textbf{Plug-and-Play}: SAFE can be seamlessly integrated with existing ensemble methods by simply adding the generate-verify logic. SAFE consistently improves recent ensembling approaches across diverse model combinations.
\end{itemize}

\section{Related Work}
\label{sec:related}
\subsection{LLM Ensemble}
LLM ensemble methods can be broadly categorized according to whether ensembling occurs \textbf{after inference} or \textbf{during inference}~\citep{chen2025harnessing}. Research in both directions has progressed in parallel, each of which is detailed below.

\subsubsection{Ensemble after inference}

These approaches aggregate complete responses from multiple LLMs. While early work relied on debate-style interactions to reach consensus~\citep{du2023improving,chen-etal-2024-reconcile,liang-etal-2024-encouraging}, recent focus has shifted toward stacking LLMs in cascade, parallel, or hybrid structures.

\textbf{Cascade structure.} Models are invoked sequentially to balance cost and performance. FrugalGPT~\citep{chen2024frugalgpt} orders models by cost, invoking more expensive models only when cheaper ones produce unreliable responses. Similarly, AutoMix~\citep{aggarwal2024automix} uses self-verification to guide routing decisions, while \citet{1019493} employs token-level uncertainty.

\textbf{Parallel structure.} In contrast to cascading, parallel ensembling runs multiple models independently and then selects the best response among them. MORE~\citep{si-etal-2023-getting} trains a classifier to select the optimal response by considering model expertise, confidence, and agreement across responses. LLM-Blender~\citep{jiang-etal-2023-llm} employs a pairwise ranker to score responses and then fuses the top-$k$ candidates into a single answer.

\textbf{Hybrid structure.} Recent work has also explored combining the advantages of cascade and parallel structures. MoA~\citep{wang2025mixtureofagents} proposes a framework that iteratively feeds responses from multiple models into an aggregator LLM, which consolidates them into a single answer. Within this framework, Self-MoA~\citep{li2025rethinking} shows that using a single best-performing model can outperform using multiple distinct models in some cases, while Symbolic-MoE~\citep{chen2025symbolic} introduces query-adaptive model routing. Nevertheless, such frameworks require numerous LLM calls, and consolidating multiple responses remains challenging~\citep{wang2025think}.

\subsubsection{Ensemble during inference}
In this setting, ensembling occurs during response generation, most commonly at the token level. Co-LLM~\citep{shen-etal-2024-learning} adopts a routing method which dynamically selects which model to use for generating each token. CoSD~\citep{wang2025speculate} improves efficiency by introducing a lightweight router and integrating speculative decoding. These approaches primarily target models with identical tokenizers and rely on routing rather than aggregating probability across models.

To better exploit the collective intelligence of multiple models, another line of work explores \textit{probability-level ensemble} methods. These methods average the next-token probability distributions of different models to select the most confident token. Since probability distributions are defined over heterogeneous vocabularies, prior work has focused on constructing the ensemble distribution by aligning different vocabularies across models. GaC~\citep{GaC} integrates probabilities by taking the union of all model vocabularies and then mapping each model's vocabulary to this union. DEEPEN~\citep{deepen} projects each model’s vocabulary into a shared embedding space, merges distributions there, and maps them back to the individual vocabulary spaces. UniTE~\citep{unite} demonstrates that aligning only the top-$k$ tokens from each model is effective both in performance and efficiency. While these methods achieve strong performance in directly generating answer tokens by selecting the most confident token, they face challenges in long-sequence generation that involves reasoning. In such cases, an increase in OOV-like tokens destabilizes the ensemble, and repeated autoregressive generation across multiple models, along with the need to align their vocabulary spaces, makes such approaches inefficient. Therefore, we aim to simultaneously improve the stability and efficiency of probability-level ensembling by introducing a verification algorithm that determines \textit{when} to ensemble.

\vspace{-10pt}
\subsection{Speculative Decoding}

Speculative decoding~\citep{leviathan2023fast} is a widely used technique for reducing the cost of autoregressive generation in LLMs. To alleviate the repeated forward passes required by a large model during token generation, speculative decoding replaces this process with a small \textit{drafter} that speculates a sequence of candidate tokens. The large target model then performs a single forward pass to determine how many of the drafter’s proposed tokens to accept. This allows multiple tokens to be generated in a single forward pass of the target model, thereby reducing computational cost.

Recently, speculative decoding has been explored as a way to accelerate probability-level LLM ensembling. However, existing approaches~\citep{fu2025speculative} are limited to settings in which all models share an identical tokenizer, and cases where the drafter and target models use different tokenizers remain underexplored. In such scenarios, the drafter's tokens cannot be properly evaluated by other participating models due to tokenization misalignment and also exhibit OOV-like issues. To address this, we extend speculative decoding to ensembles composed of models with heterogeneous tokenizers by proposing an appropriate acceptance criterion.

\section{SAFE: Toward Stable and Fast LLM Ensembling}
\label{sec:methods}
We aim to solve the problem of instability and inefficiency that arises when ensembling next-token probability distributions across LLMs with heterogeneous tokenizers, particularly in long-sequence generation. To this end, we propose \textbf{SAFE}, an algorithm that preemptively determines optimal points for ensembling by jointly considering tokenization mismatch and consensus in next-token probability distributions. Notably, SAFE can be seamlessly integrated with existing ensemble methods. 

Given $k$ different LLMs, our method begins by dividing the models into two roles: a drafter $M_\mathrm{draft}$, which generates a lookahead sequence of tokens, and verifiers $M_\mathrm{ver}$, which identify the ensemble points among the drafter's tokens. We select the best-performing model as $M_\mathrm{draft}$, while the remaining models serve as $M_\mathrm{ver}$. We then iterate the \textbf{Generate} (\Cref{subsec:methods-generate})-\textbf{Verify} (\Cref{subsec:methods-verify})-\textbf{Ensemble} (\Cref{subsec:methods-ensemble}) cycle. In each iteration, $M_\mathrm{draft}$ first generates a sequence of tokens, which $M_\mathrm{ver}$ then examine to find a token that requires ensembling. At such points, ensembling is performed to replace the token with the most confident token from the averaged distributions of all models, after which $M_\mathrm{draft}$ resumes generation from the ensembled token. \Cref{fig:SAFE} shows the overview of SAFE.

\begin{figure}[t]
    \centering
    \includegraphics[width=\linewidth]{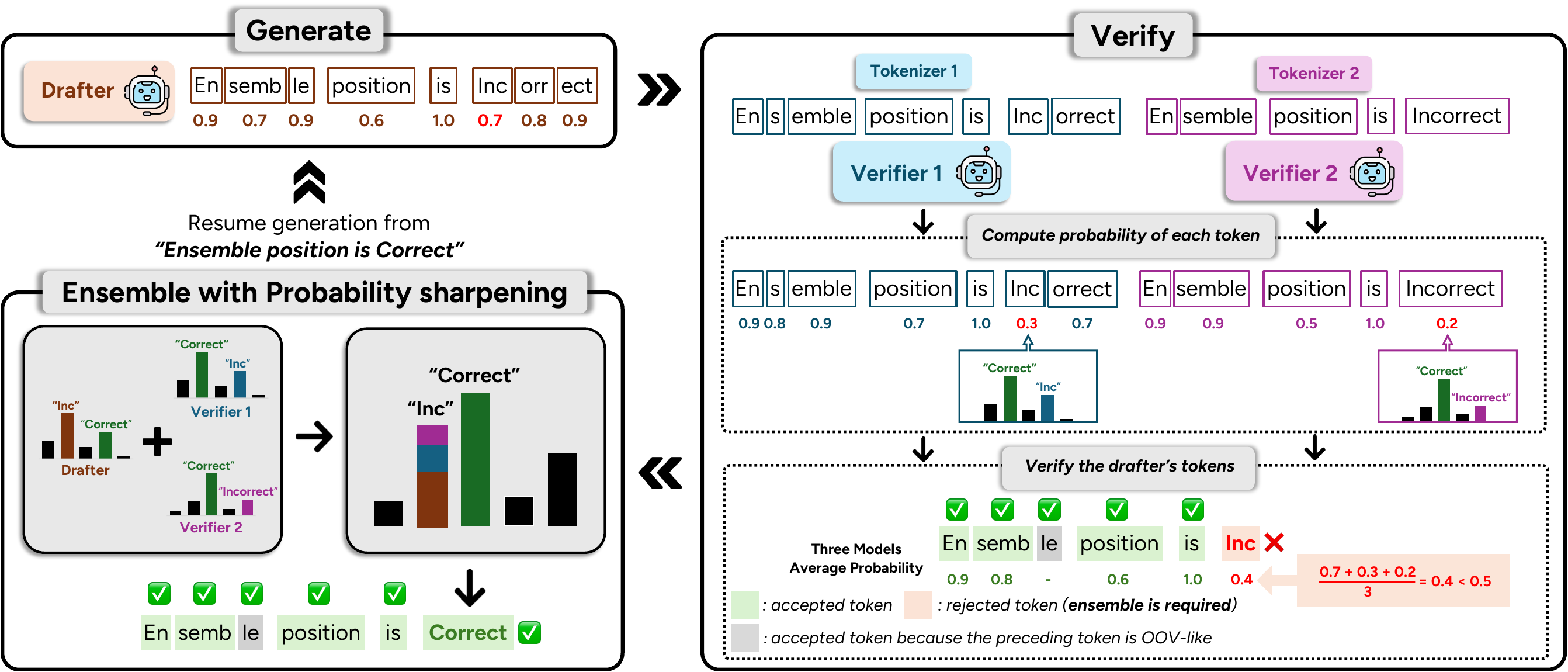}
    \caption{Overview of SAFE. The drafter generates a sequence of tokens, which the verifiers re-tokenize with own tokenization schemes and identify the necessary ensembling point. At this position, ensembling is performed with probability sharpening applied selectively to enhance precision.}
    \label{fig:SAFE}
    \vspace{-10pt}
\end{figure}

\subsection{Generate}
\label{subsec:methods-generate}
The drafter $M_\mathrm{draft}$ generates a predefined number $n$ of tokens ($t_i,\dots,t_{i+n-1}$). Producing multiple tokens rather than a single token allows SAFE to account for the different tokenization schemes of the various models. For example, consider the word \texttt{Incorrect}. Suppose $M_\mathrm{draft}$ generates it as three tokens (\texttt{Inc}, \texttt{orr}, \texttt{ect}), while other models generate it as a single token (\texttt{Incorrect}). If $M_\mathrm{draft}$ were to generate only the first token \texttt{Inc}, it would fail to capture the tokenization schemes of the other models. Therefore, at this stage, $M_\mathrm{draft}$ produces a sequence of tokens to ensure compatibility with diverse tokenization schemes. The choice of $n$ is discussed in \Cref{subsec:ablation}.

\subsection{Verify}
\label{subsec:methods-verify}
In this step, the verifiers $M_\mathrm{ver}$ collaboratively examine the drafter's tokens ($t_i,\dots,t_{i+n-1}$) to identify which tokens require ensembling. Ensembling is triggered at the earliest token $t_j$ that satisfies two checks: (i) \textit{OOV-like token verification}, requiring that the immediately preceding token $t_{j-1}$ is not an OOV-like token, and (ii) \textit{ensemble distribution verification}, requiring that $t_j$ is not the most confident token in the ensemble distribution $P_\mathrm{ens}$. This selective process addresses instability by preventing the introduction of OOV-like tokens and improves efficiency by skipping unnecessary ensembling. Importantly, this entire verification process is efficient, as the drafter's tokens are processed by the verifiers in a single forward pass rather than autoregressively.

\noindent\textbf{(i) OOV-like Token Verification.}
This check requires that the preceding token is not an OOV-like token to prevent OOV-like tokens from corrupting the model's next-token probability distribution. To determine whether a drafter token $t_j$ is an OOV-like token, we examine whether the token boundary up to $t_j$ aligns with the tokenization boundaries of the verifiers, ensuring that each verifier can be conditioned on valid prefix tokens. For example, in the word \texttt{Incorrect}, tokens such as \texttt{Inc} or \texttt{orr} are OOV-like, but \texttt{ect} is not. This is because the tokenization boundaries up to \texttt{Inc} or \texttt{orr} are inconsistent with the other model’s tokenization boundary \texttt{Incorrect}, forcing that model to be conditioned on an invalid prefix such as \texttt{Inc} or \texttt{Incorr}. The detailed verification process are as follows. First, each verifier model $LLM_v\in M_\mathrm{ver}$ tokenizes the drafter's sequence $\mathbf{t}_{<i+n}$ into its own tokenization, $\mathbf{t^v}_{<v_{i+n}}$. Then, the drafter's token $t_j$ is defined as an OOV-like token for $LLM_v$ if the tokenization boundary of $\mathbf{t}_{<j+1}$ does not match any boundary in $LLM_v$'s tokenization. This condition is formally stated in \Cref{eq:oov-verify}:
\begin{align}
t_j \text{ is OOV-like in } LLM_v
\Longleftrightarrow
\forall\, x \in [0, v_{i+n}-1], \;\; 
\text{Decode}(\mathbf{t}_{<j+1})
\ne \text{Decode}(\mathbf{t^v}_{<x+1}),
\label{eq:oov-verify}
\end{align}
where Decode$(\cdot)$ means merging tokens back into text. If $t_j$ is identified as OOV-like by any verifier, ensembling is not triggered at the subsequent token $t_{j+1}$. Therefore, in the word \texttt{Incorrect}, ensembling is skipped at \texttt{orr} and \texttt{ect}, but can be triggered at the token following \texttt{ect}.

\noindent\textbf{(ii) Ensemble Distribution Verification.} For tokens that pass OOV-like token verification, our method further checks whether the token is the most probable prediction in the ensemble distribution. To avoid the cost of repeatedly constructing the ensemble distribution, we instead verify whether a token $t_j$ is the most probable token by examining each model’s own distribution. Specifically, given $LLM_v$'s tokenization $\mathbf{t^v}_{<v_j}$ of the drafter's tokens $\mathbf{t}_{<j}$, the drafter's token $t_j$ is regarded as the most confident and ensembling is therefore skipped, if either of the following holds:
\begin{enumerate}
    \item \textbf{(Unanimous consensus among verifiers)}\\
    If $t_{v_j}^v=\arg\max_t P_v(t\mid\mathbf{t^v}_{<v_j})$ for all $LLM_v\in M_\mathrm{ver}$, we skip ensembling, where $P_v$ is the probability distribution of $LLM_v$.
    \item \textbf{(Average probability above one half)}\\
    If $\frac{1}{|M_\mathrm{ver}\cup M_\mathrm{draft}|}\sum_{LLM_v\in M_\mathrm{ver}\cup M_\mathrm{draft}} P_v(t_{v_j}^v\mid\mathbf{t^v}_{<v_j}) > \frac{1}{2}$, we skip ensembling at $t_j$.
\end{enumerate}
Intuitively, the first condition checks whether $t^v_{v_j}$ is the most probable one across all verifiers, while the second checks whether its average probability across all models is greater than 0.5. Adopting these criteria does not compromise accuracy compared to using the exact ensemble distribution, which is proved in \Cref{appendix:theorem1}.

\begin{algorithm}
\caption{\textbf{SAFE}: Generate-Verify-Ensemble algorithm}
\begin{algorithmic}[1]
\Require $M_\mathrm{draft}$, $M_\mathrm{ver}$, $p$: prompt, $n$: drafter's sequence length
\State $i \gets 1$
\State $t_0 \gets BOS \text{ token}$
\While{not End-of-Sentence}
  \State $t_i,\cdots, t_{i+n-1} \gets M_\mathrm{draft}(p, \mathbf{t}_{<i})$ \Comment{\textbf{1. Generate}}
  \State $\mathbf{t^v}_{<v_{i+n}}\gets \textsc{Tokenize}_v(\mathbf{t}_{<i+n}), \quad\forall v\in M_\mathrm{ver}\cup M_\mathrm{draft}$
  \For{$j = i \to i+n-1$} \Comment{\textbf{2. Verify}}
    %\If{$t_{j-1} \ne \text{OOV-like token }\textbf{and } t_j\ne \arg\max_t\hat{P}_{ens}(t\mid p, \mathbf{t}_{<j})$}
    \If{$t_{j-1} \ne \text{OOV-like token }\textbf{and } t_j \text{ passes Ensemble Distribution Verification}$}
        \State $P_\mathrm{ens} \gets \textsc{AverageDist}(\{P_v(\cdot\mid p, \mathbf{t^v}_{<v_j})\}_{v \in M_\mathrm{ver} \cup M_\mathrm{draft}})$ \Comment{\textbf{3. Ensemble}}
        \State $t_j \gets \arg\max_tP_\mathrm{ens}(t\mid p, \mathbf{t}_{<j})$ 
        \State $i \gets j+1$
    \Else
        \State $i \gets i+n$
    \EndIf
  \EndFor

\EndWhile
\end{algorithmic}
\end{algorithm}

\subsection{Ensemble: Sharpening Ensemble Distribution}
\label{subsec:methods-ensemble}
In the \textbf{Ensemble} step, any token that passes both verifications in the \textbf{Verify} step is replaced with the most probable token from $P_\mathrm{ens}$, which is constructed as the average of all models' probability distributions using existing ensemble methods. However, different tokenization schemes across models can scatter probability mass for the same word across multiple sub-word tokens, resulting in an overly smooth ensemble distribution (\emph{i.e.}, $\max P_\mathrm{ens}<$ 0.5) that hinders confident token selection. To address this, we apply a probability sharpening strategy to consolidate the diffused probability mass. We explore two different sharpening strategies. The first adopts a heuristic approach that consolidates the diffused probability by reallocating the probability mass from variant subword tokens to their common prefix token. To prevent inflating probabilities of low-quality tokens, reallocation is applied only to the drafter's tokens with initial probability greater than a threshold $\lambda$. Formally, the entire sharpening process is defined as:
\begin{equation*}
P_\mathrm{ens}(t_j) \leftarrow P_\mathrm{ens}(t_j) + \sum_{t_i \;:\; t_i.\texttt{startswith}(t_j)} P_\mathrm{ens}(t_i),
\; \text{where } P_\mathrm{ens}(t_j) > \lambda.
\end{equation*}
The second strategy replaces the arithmetic mean with the geometric mean when aggregating the models’ probability distributions. Since the geometric mean strongly penalizes tokens that receive low probability by any individual model, it effectively concentrates probability mass on tokens that are consistently supported across models. These two strategies are compared in \cref{subsec:ablation}.

After selecting the most confident token from the (potentially sharpened) ensemble distribution, the drafter resumes generation from this token.

\section{Experiments}
\label{sec:experiments}
This section evaluates \textbf{SAFE} across various benchmarks and model combinations. We begin by outlining the experimental setup, followed by the implementation details, including our KV caching strategy to further improve efficiency. We then provide an analysis of the results.

\subsection{Experimental Settings}
\noindent\begin{wrapfigure}{r}{0.3\linewidth} % r=right, l=left
    \centering
    \vspace{-10pt}
    \includegraphics[width=\linewidth]{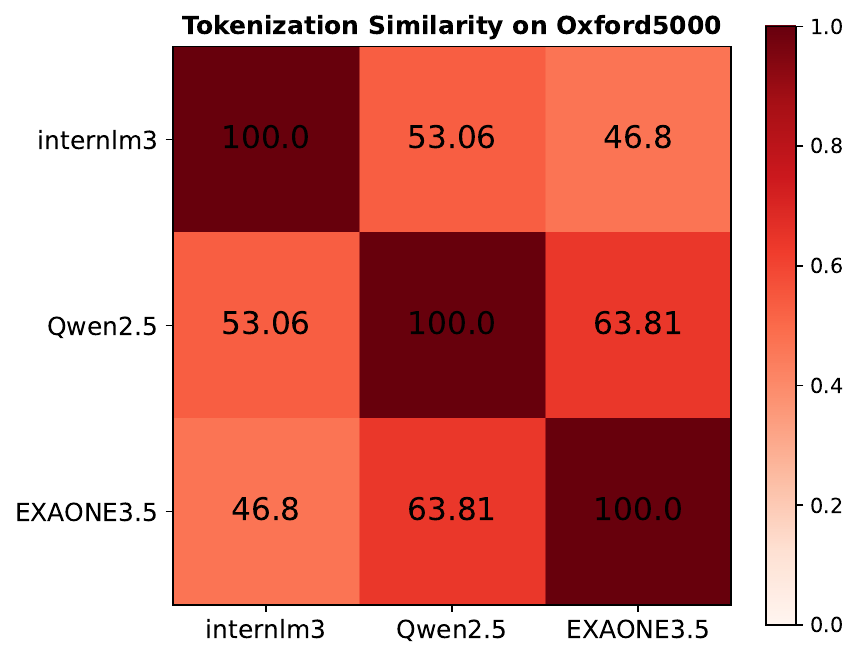}
    \caption{Tokenization agreement rates between each model pair on Oxford 5000 words.}
    \vspace{-10pt}
    \label{fig:tok}
\end{wrapfigure}
\label{subsec:exper set}\noindent\textbf{Models.} We select three widely used LLMs with similar capability but heterogeneous tokenization schemes: Internlm3-8B-Instruct~\citep{cai2024internlm2}, Qwen2.5-7B-Instruct~\citep{qwen2.5}, and EXAONE-3.5-7.8B-Instruct~\citep{an2024exaone}. \Cref{fig:tok} presents the tokenization similarity across model pairs on Oxford 5000 words~\citep{oxford5000}, which consists of commonly used English words. As illustrated, only a small portion of the words are tokenized identically across models, with agreement rates ranging from 40\% to 60\%. To also evaluate ensembling on models with nearly identical tokenizations, we include two widely used LLMs with high agreement rates: Qwen2-7B-Instruct~\citep{qwen2} and Llama-3.1-8B-Instruct~\citep{dubey2024llama}. For these two models, more than 99\% of Oxford 5000 words are tokenized identically. For further study, 32B-scale models are experimented in \Cref{appendix:large models}.

\noindent\textbf{Benchmarks.} To evaluate performance across diverse domains, we use five benchmarks. For general knowledge, we adopt MMLU-redux~\citep{mmlu-redux}, a refined subset of MMLU~\citep{mmlu} that covers 30 subjects with human-annotated corrections. For mathematical reasoning, we use MATH500~\citep{math500} and GSM8K~\citep{gsm8k}. For general reasoning, we employ ARC-Challenge~\citep{arc} and BBH~\citep{bbh}. All benchmarks are evaluated under a zero-shot CoT setting, except for BBH, which uses 3-shot CoT. For BBH, we choose 15 subjects where the models exhibit comparable performance. Further details, including prompt templates and selected BBH subjects, are provided in \Cref{appendix:exper}.

\noindent\textbf{Baselines.} We apply our method to two recent SOTA probability-level ensemble methods: GaC~\citep{GaC} and UniTE~\citep{unite}. GaC performs ensembling only when the main LLM's next-token probability falls below 0.5. In contrast, UniTE represents the SOTA among methods that ensemble at every generation step. In our setting, “\textbf{\textit{X} + SAFE}” means that the ensemble method \textbf{\textit{X}} is applied only at the token positions that SAFE identifies as requiring ensembling.

\subsection{Implementation}
\label{subsec:KV Cache}
\noindent\textbf{KV Cache Implementation.} KV caching is essential for efficient generation in LLMs. However, unlike standard generation settings where previously generated tokens remain fixed, ensemble generation may replace tokens during the ensembling process, leading to inconsistencies between the cache and the actual input sequence. Consequently, prior approaches have typically avoided implementing KV cache management, leaving it as future work. In contrast, our method updates each model’s KV cache at the end of every ensemble step to align with the ensembled output, and uses this updated cache in the next step, thereby ensuring cache consistency. We apply our KV cache management to all baselines in our experiments. Please refer to \Cref{appendix:KV cache} for details. 

\noindent\textbf{Hardware and Hyperparameters.} We configure our method as follows. Following the approach of UniTE~\citep{unite} for selecting a primary model, we select the model with the best average performance as $M_\mathrm{draft}$. For probability sharpening, we apply the heuristic strategy in our main results with the threshold $\lambda$ set to 0.1. The drafter generates tokens in chunks of 5, and all models use greedy decoding with a maximum output length of 2048. For ensembling, each model is loaded onto a separate GPU, with all experiments conducted on RTX 3090 GPUs with FP16 precision and FlashAttention-2~\citep{dao2023flashattention2fasterattentionbetter} enabled.

\begin{table}[t]
  \centering
  \resizebox{\linewidth}{!}{
  \begin{tabular}{@{}lcc|cc|cc|cc|cc|c@{}}
    \toprule
    \multirow{2}{*}{{\textbf{Method}}} & \multicolumn{2}{c}{MMLU-redux} & \multicolumn{2}{c}{MATH500} & \multicolumn{2}{c}{GSM8K} & \multicolumn{2}{c}{BBH} & \multicolumn{2}{c}{ARC-C} & Avg. \\
    \cmidrule{2-12}
    & Accuracy & E/T & Accuracy & E/T & Accuracy & E/T & Accuracy & E/T & Accuracy & E/T  & Accuracy\\
    \midrule
    Internlm3-8B & 76.89 & - & 74.8 & - & 90.14 & - & 82.26 & - & 90.27 & - & 82.87 \\
    Qwen2.5-7B & 74.88 & - & 72.4 & - & 91.81 & - & 79.15 & - & 88.74 & - & 81.40 \\
    EXAONE3.5-7.8B & 73.25 & - & 72.8 & - & 90.45 & - & 78.75 & - & 90.44 & - & 81.14\\
    \cmidrule[1.5pt]{1-12}
    \multicolumn{12}{c}{\textit{Two-model ensembling (Internlm3 + Qwen2.5)}} \\
    \cmidrule[1.5pt]{1-12}
    GaC & 77.00 \textcolor{red}{(+0.11)} & 8.43 & 74.2 \textcolor{blue}{(-0.6)} & 1.04 & 91.28 \textcolor{blue}{(-0.53)} & 0.82 & 82.34 \textcolor{red}{(+0.08)} & 5.69 & 90.61 \textcolor{red}{(+0.34)} & 10.22 & 83.09 \textcolor{red}{(+0.22)} \\
    \textbf{GaC + SAFE} & 77.11 \textcolor{red}{(+0.22)} & 5.23 & 76.0 \textcolor{red}{(+1.2)} & 0.71 & 91.36 \textcolor{blue}{(-0.45)} & 0.67 & 82.34 \textcolor{red}{(+0.08)} & 3.73 & \textbf{91.13} \textcolor{red}{(+0.86)} & 6.22 & 83.59 \textcolor{red}{(+0.72)} \\
    \cmidrule{1-12}
    UniTE & 73.39 \textcolor{blue}{(-3.5)} & 100 & 59.6 \textcolor{blue}{(-15.2)} & 100 & 75.06 \textcolor{blue}{(-16.75)} & 100 & 79.58 \textcolor{blue}{(-2.68)} & 100 & 87.97 \textcolor{blue}{(-2.30)} & 100 & 75.12 \textcolor{blue}{(-7.75)} \\
    \textbf{UniTE + SAFE} & \textbf{77.81} \textcolor{red}{(+0.92)} & 12.59 & \textbf{77.4} \textcolor{red}{(+2.6)} & 3.82 & \textbf{92.04} \textcolor{red}{(+0.23)} & 5.16 & \textbf{82.97} \textcolor{red}{(+0.71)} & 10.35 & 90.78 \textcolor{red}{(+0.51)} & 14.47 & \textbf{84.20} \textcolor{red}{(+1.33)} \\
    \cmidrule[1.5pt]{1-12}
    \multicolumn{12}{c}{\textit{Two-model ensembling (Qwen2.5+ EXAONE3.5)}} \\
    \cmidrule[1.5pt]{1-12}
    GaC & 76.01 \textcolor{red}{(+1.13)} & 13.42 & 75.4 \textcolor{red}{(+2.6)} & 2.31 & 92.65 \textcolor{red}{(+0.84)} & 2.60 & 79.61 \textcolor{red}{(+0.46)} & 8.15 & 90.27 \textcolor{blue}{(-0.17)} & 14.66 & 82.79 \textcolor{red}{(+1.39)} \\
    \textbf{GaC + SAFE} & \textbf{76.79} \textcolor{red}{(+1.91)} & 7.52 & \textbf{76.4} \textcolor{red}{(+3.6)} & 1.09 & 92.57 \textcolor{red}{(+0.76)} & 1.26 & 79.66 \textcolor{red}{(+0.51)} & 4.51 & \textbf{90.78} \textcolor{red}{(+0.34)} & 8.31 & 83.24 \textcolor{red}{(+1.84)} \\
    \cmidrule{1-12}
    UniTE & 53.75 \textcolor{blue}{(-21.13)} & 100 & 43.4 \textcolor{blue}{(-29.4)} & 100 & 77.03 \textcolor{blue}{(-14.78)} & 100 & 67.45 \textcolor{blue}{(-11.70)} & 100 & 72.61 \textcolor{blue}{(-17.83)} & 100 & 62.85 \textcolor{blue}{(-18.55)}\\
    \textbf{UniTE + SAFE} & 76.54 \textcolor{red}{(+1.66)} & 17.24 & \textbf{76.4} \textcolor{red}{(+3.6)} & 4.69 & \textbf{92.72} \textcolor{red}{(+0.91)} & 5.60 & \textbf{81.69} \textcolor{red}{(+2.54)} & 14.03 & \textbf{90.78} \textcolor{red}{(+0.34)} & 19.24 &  \textbf{83.63} \textcolor{red}{(+2.23)}\\
    \cmidrule[1.5pt]{1-12}
    \multicolumn{12}{c}{\textit{Two-model ensembling (Internlm3 + EXAONE3.5)}} \\
    \cmidrule[1.5pt]{1-12}
    GaC & 76.36 \textcolor{blue}{(-0.53)} & 8.71 & 75.8 \textcolor{red}{(+1.0)} & 1.14 & \textbf{90.75} \textcolor{red}{(+0.30)} & 0.88 & \textbf{81.57} \textcolor{blue}{(-0.69)} & 6.32 & 90.78 \textcolor{red}{(+0.34)} & 10.07 & 83.05 \textcolor{red}{(+0.18)}\\
    \textbf{GaC + SAFE} & \textbf{77.21} \textcolor{red}{(+0.32)} & 5.94 & \textbf{77.2} \textcolor{red}{(+2.4)} & 0.84 & 90.67 \textcolor{red}{(+0.22)} & 0.72 & 81.54 \textcolor{blue}{(-0.72)} & 4.38 & \textbf{91.72} \textcolor{red}{(+1.28)} & 6.92 & \textbf{83.67} \textcolor{red}{(+0.80)} \\
    \cmidrule{1-12}
    UniTE & 72.51 \textcolor{blue}{(-4.38)} & 100 & 73.6 \textcolor{blue}{(-1.2)} & 100 & 89.31 \textcolor{blue}{(-1.14)} & 100 & 78.04 \textcolor{blue}{(-4.22)} & 100 & 88.23 \textcolor{blue}{(-2.21)} & 100 & 80.34 \textcolor{blue}{(-2.53)}\\
    \textbf{UniTE + SAFE} & 76.08 \textcolor{blue}{(-0.81)} & 15.84 & 77.0 \textcolor{red}{(+2.2)} & 4.72 & \textbf{90.75} \textcolor{red}{(+0.30)} & 5.55 & 81.37 \textcolor{blue}{(-0.89)} & 13.75 & 90.27 \textcolor{blue}{(-0.17)} & 17.89 & 83.09 \textcolor{red}{(+0.22)}\\
    \cmidrule[1.5pt]{1-12}
    \multicolumn{12}{c}{\textit{Three-model ensembling (Internlm3 + Qwen2.5 + EXAONE3.5)}} \\
    \cmidrule[1.5pt]{1-12}
    UniTE & 73.92 \textcolor{blue}{(-2.97)} & 100 & 76.0 \textcolor{red}{(+1.2)} & 100 & 91.28 \textcolor{blue}{(-0.53)} & 100 & 77.47 \textcolor{blue}{(-4.79)} & 100 & 87.20 \textcolor{blue}{(-3.24)} & 100 & 81.17 \textcolor{blue}{(-1.70)}\\
    \textbf{UniTE + SAFE} & \textbf{77.60} \textcolor{red}{(+0.71)} & 16.18 & \textbf{79.0} \textcolor{red}{(+4.2)} & 4.12 & \textbf{92.04} \textcolor{red}{(+0.23)} & 5.14 & \textbf{82.77} \textcolor{red}{(+0.51)} & 12.74 & \textbf{91.55} \textcolor{red}{(+1.11)} & 18.60 & \textbf{84.59} \textcolor{red}{(+1.72)} \\
    \bottomrule
  \end{tabular}
  } % resize box end
  \caption{Ensembling results of models with substantially different tokenizations using CoT. E/T (\%) represents the percentage of ensembling during generation, computed as $\frac{\text{\# \textbf{E}nsemble}}{\text{\# \textbf{T}oken}}$ (\%). Numbers in parentheses denote the performance gap relative to the best-performing individual model.}
  \label{tab:main results}
  \vspace{-10pt}
\end{table}

\subsection{Main Analysis}
\label{subsec:results}
Results of ensembling models with substantially different tokenization schemes are shown in \Cref{tab:main results}. We also report E/T, the percentage of tokens that undergo ensembling over the entire sequence.

\noindent\textbf{SAFE improves performance with less ensembling.} Overall, SAFE generally outperforms individual models, making existing ensemble methods practical even under CoT. As shown in \Cref{tab:main results}, the baseline UniTE struggles significantly under a CoT setting, consistently underperforming individual models across all experiments. This is because it ensembles at every generation step, which increases the frequency of OOV-like tokens and consequently corrupts probability distributions. In contrast, applying SAFE enables UniTE to achieve the best performance in many cases (9/15) while reducing the ensemble frequency (E/T) to fewer than 20\% of tokens. This highlights the importance of determining when to ensemble, particularly when generating long sequences with models that use heterogeneous tokenizers. GaC, on the other hand, is more robust, since it performs ensembling only when the main LLM's probability falls below 0.5, yielding an unintended but beneficial effect of preventing the introduction of OOV-like tokens. Nevertheless, SAFE further improves GaC's performance while reducing the number of ensemble operations.

One interesting finding is that much less ensembling is required in math datasets. When SAFE is applied with UniTE, ensembling is triggered for only 4.85\% of tokens on average in math datasets, whereas it rises to 15.24\% in general-domain datasets, which is nearly three times higher. We attribute this to the nature of math responses, which often contain equations or structured expressions with limited variation, leading to higher agreement among verifier models. In contrast, responses in general-domain datasets allow for greater linguistic variability, which reduces agreement across models and thus requires more frequent ensembling.

\noindent\textbf{Increasing the number of models is not always optimal.} As shown in \Cref{tab:main results}, ensembling three models does not consistently outperform two-model ensembling, whereas ensembling the top-2 best performing models typically yields the strongest results. This suggests that when model rankings are known, restricting ensembling to the top-2 models is both effective and efficient. On the other hand, when rankings are unknown, ensembling multiple comparable models provides stable, though not necessarily optimal, performance.

\begin{figure}
    \centering
    \includegraphics[width=\linewidth]{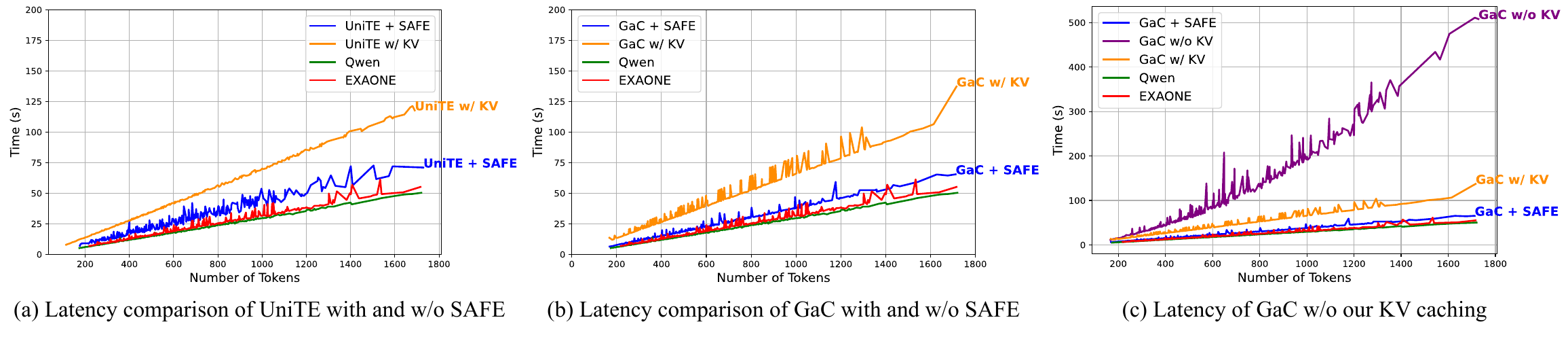}
    \caption{Latency comparison on MATH500. Our method shows similar latency compared to individual models, even when generating long sequences. \textit{w/ KV} indicates that our KV caching strategy is applied. Note that the time-axis scale in (c) differs from (a) and (b).}
    \label{fig:latency}
    \vspace{-15pt}
\end{figure}

\noindent\textbf{SAFE can be as fast as individual models.} A key challenge in LLM ensembling is achieving inference speed comparable to running a single model. As shown in \Cref{fig:latency}, SAFE closely matches the latency of individual models when generating hundreds of tokens, regardless of the underlying ensemble method. Moreover, under the same computational resources, it significantly improves efficiency over existing ensemble methods when generating long responses. This efficiency stems from three properties of our approach. First, only $M_\mathrm{draft}$ is responsible for autoregressive generation. Second, SAFE substantially reduces the number of ensembling. Third, our KV caching strategy further improves efficiency, as illustrated in \Cref{fig:latency}(c). We provide more comparisons in \Cref{appendix:latency}.

\begin{table}[b]
\centering
\begin{minipage}{0.48\linewidth}
  \centering
  {\resizebox{\linewidth}{!}{
  \begin{tabular}{@{}lccc@{}}
    \toprule
    \textbf{Method} & MMLU-redux & MATH500 & GSM8K \\
    \midrule
    Qwen2-7B & 69.25 & 49.8 & 85.90   \\
    Llama3.1-8B & 68.51 & 47.6 & 82.56 \\
    \cmidrule[1.5pt]{1-4}
    \multicolumn{4}{c}{\textit{Two-model ensembling (Qwen2 + Llama3.1)}} \\
    \cmidrule[1.5pt]{1-4}
    GaC & 69.50 \textcolor{red}{(+0.25)} & 52.4 \textcolor{red}{(+2.6)} & 85.37 \textcolor{blue}{(-0.53)} \\
    \textbf{GaC + SAFE} & \textbf{69.99} \textcolor{red}{(+0.74)} & \textbf{59.4} \textcolor{red}{(+9.6)} & \textbf{86.66} \textcolor{red}{(+0.76)}  \\
    \cmidrule{1-4}
    UniTE & 68.90 \textcolor{blue}{(-0.35)}  & 54.0 \textcolor{red}{(+4.2)} & 79.98 \textcolor{blue}{(-5.92)}  \\
    \textbf{UniTE + SAFE} & 69.71 \textcolor{red}{(+0.46)} & 55.6 \textcolor{red}{(+5.8)} & 84.08 \textcolor{blue}{(-1.82)}\\
    \bottomrule
  \end{tabular}
  }} % end resize
  \caption{Ensembling results of models with similar tokenization.}
  \label{tab:qwen_llama}
\end{minipage}
\hfill
\begin{minipage}{0.48\linewidth}
  \centering
  {\resizebox{\linewidth}{!}{
  \begin{tabular}{@{}lccc@{}}
    \toprule
    \textbf{Method} & MMLU-redux & MATH500 & GSM8K \\
    \midrule
    Internlm3-8B & 76.89 & 74.8 & 90.14   \\
    Qwen2.5-7B & 74.88 & 72.4 & 91.81 \\
    \cmidrule[1.5pt]{1-4}
    \multicolumn{4}{c}{\textit{Two-model ensembling (Internlm3 + Qwen2.5)}} \\
    \cmidrule[1.5pt]{1-4}
    GaC & 77.00 & 74.2 & 91.28 \\
    GaC + SAFE (w/o sharpen.) & \textbf{77.11} & 75.2  & \textbf{91.36}  \\
    \textbf{GaC + SAFE} & \textbf{77.11} & \textbf{76.0}  & \textbf{91.36}  \\
    \cmidrule{1-4}
    UniTE & 73.39  & 59.6  & 75.06 \\
    UniTE + SAFE (w/o sharpen.) & 77.53  & 76.6 & 91.66 \\
    \textbf{UniTE + SAFE} & \textbf{77.81}  & \textbf{77.4} & \textbf{92.04} \\
    \bottomrule
  \end{tabular}
  }} % end resize
  \caption{Ablation on probability sharpening strategy.}
  \label{tab:ablation_ps}
\end{minipage}
\end{table}

\noindent\textbf{SAFE further improves performance when ensembling models with similar tokenization.} We further evaluate SAFE on models with highly similar tokenizations, where more than 99\% of Oxford 5000 words are tokenized identically. As shown in \Cref{tab:qwen_llama}, the performance drop of existing methods is less severe than in \Cref{tab:main results}. This is because highly aligned tokenization schemes greatly reduce the occurrence of OOV-like tokens, leading to more stable ensembling. Nevertheless, applying SAFE to existing methods consistently improves performance, yielding over a 9\% gain on MATH500 compared to the best-performing individual model.

\begin{table}
    \centering
    \begin{tabular}{@{}lccc|c@{}}
    \toprule
    \textbf{Sharpening Method} & MMLU-redux & MATH500 & GSM8K & Avg. \\
    \midrule
    UniTE + SAFE (no sharpening) & 77.53  & 76.6 & 91.66 & 81.93 \\
    UniTE + SAFE ($\lambda=0.1$) & \underline{77.81}  & 77.4 & \underline{92.04} & \underline{82.42} \\
    UniTE + SAFE ($\lambda=0.2$) & 77.28  & \underline{77.6} & \underline{92.04} & 82.31 \\
    UniTE + SAFE ($\lambda=0.3$) & 77.21  & 76.6 & 91.89 & 81.90 \\
    UniTE + SAFE (geometric mean) & \textbf{78.31}  & \textbf{77.6} & \textbf{92.27} & \textbf{82.73} \\
    \bottomrule
  \end{tabular}
    \caption{Performance across different probability sharpening methods. UniTE + SAFE is used to ensemble two models, Internlm3-8B-Instruct and Qwen2.5-7B-Instruct.}
    \label{tab:ablation_ps_thres}
\end{table}

\begin{table}
    \centering
    \begin{tabular}{@{}lccc@{}}
    \toprule
    \textbf{Method} & MMLU-redux & GSM8K & ARC-C\\
    \midrule
    Internlm3-8B & 76.89  & 90.14  & 90.27 \\
    Qwen2.5-7B & 74.88  & 91.81 & 88.74  \\
    \cmidrule[1.5pt]{1-4}
    \multicolumn{4}{c}{\textit{Two-model ensembling (Internlm3 + Qwen2.5)}} \\
    \cmidrule[1.5pt]{1-4}
    UniTE + SAFE$_3$ & 77.67  & 91.66  & 90.19 \\
    UniTE + SAFE$_5$ & 77.81 & \textbf{92.04} & \textbf{90.78}  \\
    UniTE + SAFE$_8$ & \textbf{78.31} & \textbf{92.04}  & \textbf{90.78} \\
    \bottomrule
  \end{tabular}
    \caption{Ablation on drafter sequence length. SAFE$_n$ denotes generation of $n$-token sequences.}
    \label{tab:ablation_seqlen}
    \vspace{-10pt}
\end{table}

\subsection{Ablation Study}
\label{subsec:ablation}
We conduct ablation studies on probability sharpening strategy and the drafter’s sequence length.

\noindent\textbf{Probability sharpening.} As shown in \Cref{tab:ablation_ps}, incorporating probability sharpening consistently improves performance across benchmarks and ensemble methods. This result highlights that sharpening is beneficial for choosing an accurate token when the ensemble distribution becomes overly smooth. To further examine whether different sharpening strategies provide similar benefits, we conduct additional experiments by varying the threshold $\lambda$ in our heuristic sharpening method and by applying the geometric mean. As demonstrated in \Cref{tab:ablation_ps_thres}, multiple sharpening strategies offer consistent gains. The geometric mean generally yields strong performance, which we attribute to its ability to gather the probability mass dispersed across multiple tokens by individual models and concentrate it on the token with the highest consensus among the models. However, considering that the arithmetic mean is widely used and often required, our heuristic strategy remains useful in such settings. Therefore, we leave the specific choice of sharpening method as a flexible design decision, allowing each ensemble method to adopt the sharpening strategy most suitable for its characteristics. The core takeaway is the importance of mitigating an overly smooth ensemble distribution. Regarding the threshold $\lambda$ used in our heuristic sharpening strategy, the performance does not change drastically across different values. However, setting the threshold too high reduces the number of tokens subject to sharpening, which in turn diminishes its effectiveness.

\noindent\textbf{Drafter's sequence length.} \Cref{tab:ablation_seqlen} presents the ablation on drafter sequence length. Generating short sequences may fail to capture differences in tokenization across models, causing slight performance drops. Conversely, generating overly long sequences does not harm accuracy but may reduce efficiency, as shown in \Cref{fig:appendix_latency_ablation}. This is because longer sequences force the drafter to regenerate tokens more often from the ensembled token. A length of 5 provides the best balance between accuracy and efficiency.

\section{Conclusion}
In this paper, we examined probability-level ensemble methods for long-form generation and showed that deciding \textit{when} to ensemble is critical for both accuracy and efficiency. To this end, we proposed \textbf{SAFE}, a generate-verify-ensemble framework that triggers ensembling only when safe and necessary, guided by tokenization mismatch and consensus in models' next-token probability distributions. SAFE further improved accuracy via probability sharpening to mitigate smooth ensemble distribution, and our KV cache implementation enabled much faster ensembling in long-form generation. Experiments demonstrated that SAFE outperforms existing methods with only a few ensemble operations across widely used 7B-scale model combinations. We believe SAFE offers a practical step toward making LLM ensembling both robust and deployable in real-world applications.

\subsection*{Acknowledgements}
This work was partly supported by Institute for Information \& communications Technology Planning \& Evaluation (IITP) grants (No. RS-2019-II190075, Artificial Intelligence Graduate
School Program (KAIST), No. RS-2024-00457882, AI Research Hub Project, No.2022-0-00713, Meta-learning applicable to real-world problems, No. 2022-0-00984, Development
of Artificial Intelligence Technology for Personalized Plug-and-Play Explanation and Verification of Explanation) funded by the Korea government (MSIT).

\bibliography{iclr2026_conference}
\bibliographystyle{iclr2026_conference}
\newpage
\appendix
\setcounter{thm}{0}
\section{Experimental details}
\label{appendix:exper}
In this section, we provide additional experimental details. We first present the exact prompts and BBH subjects used in our experiments, followed by an explanation of baseline selection.

\noindent\textbf{Prompts} For multiple-choice questions, we follow the template in \Cref{fig:multiplechoice template}. For math datasets, we use the template in \Cref{fig:math template}.

\begin{figure}[h]
    \centering
    \includegraphics[width=\linewidth]{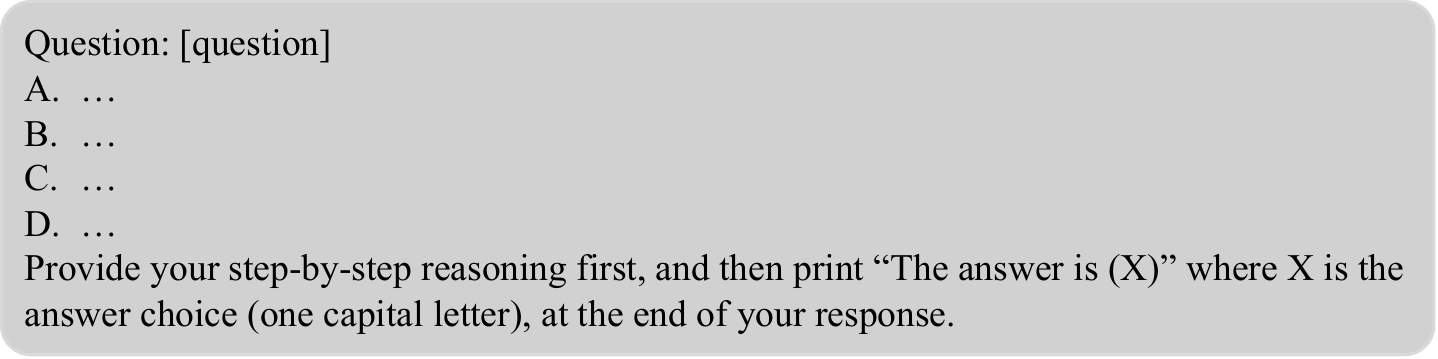}
    \caption{Prompt template used for multiple-choice questions.}
    \label{fig:multiplechoice template}
\end{figure}
\begin{figure}[h]
    \centering
    \includegraphics[width=\linewidth]{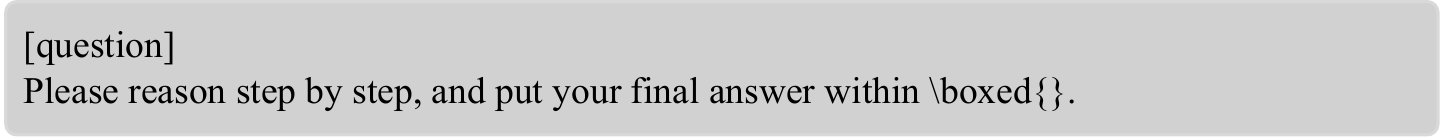}
    \caption{Prompt template used for math problems.}
    \label{fig:math template}
\end{figure}

\noindent\textbf{BBH subjects} In our main experiments (\Cref{tab:main results}), we filter BBH subjects to include only subjects where models exhibit comparable performance. This choice is based on the observation of UniTE~\citep{unite} that ensemble is meaningful when the base models exhibit similar performance levels. The chosen 15 BBH subjects are: boolean expressions, causal judgement, date understanding, disambiguation qa, formal fallacies, logical deduction three objects, movie recommendation, navigate, penguins in a table, reasoning about colored objects, ruin names, salient translation error detection, snarks, temporal sequences, and tracking shuffled objects three objects.

\noindent\textbf{Baseline selection} As our purpose is to enhance the stability and efficiency of \textit{probability-level ensemble} methods, we focus on recent, state-of-the-art probability-level ensemble methods as baselines. 
Specifically, we consider two representative methods that differ in when ensembling is performed. The first is GaC, which ensembles only when the main LLM's probability falls below 0.5. To the best of our knowledge, this is the only work that does not ensemble at every token generation. The second is UniTE, which achieves state-of-the-art performance among methods that ensemble at every generation step. By applying SAFE to both methods, we demonstrate its versatility and its effectiveness in determining when to ensemble.

\section{Limitations and Future Work}
\label{appendix:limits}
\textbf{SAFE} does not always guarantee superior performance compared to the best-performing individual model. As shown in \Cref{tab:main results}, in a few cases SAFE performs slightly worse than the best-performing individual models. We believe this is because the most confident token in the ensemble distribution is not necessarily the optimal choice, as poorly performing models can distort the ensemble distribution by elevating an incorrect token as the most confident. Nevertheless, SAFE generally outperforms individual models by ensuring a stable ensemble distribution when applied to ensembles of models with similar capability.

Additionally, our experiments are limited to non-reasoning models. Extending SAFE to reasoning models~\citep{yang2025qwen3,guo2025deepseek} would be a promising direction, as reasoning models have recently gained significant attention.

\section{Correctness of Ensemble Distribution Verification}
\label{appendix:theorem1}

Theorem \ref{theorem1} guarantees that applying the ensemble distribution verification criteria does not compromise accuracy.
\begin{thm}
\label{theorem1}
Let $\mathbf{t}_{<j}$ denote the drafter's prefix. For each verifier model $LLM_v \in M_\mathrm{ver}$, let $t_{v_j}^v$ be its next token, aligned such that $\mathbf{t^v}_{<v_j}$ is the $LLM_v$'s tokenization of $\mathbf{t}_{<j}$. Let $P_v$ be the next-token probability distribution of $LLM_v$.
Suppose (i) all verifier models unanimously agree on the next token $t_{v_j}^v$, where $t_{v_j}^v = \arg\max_{t} P_v(t \mid \mathbf{t^v}_{<j})$ for all $v$, or (ii) their average probability in $t_{v_j}^v$ exceeds $\frac{1}{2}$:
$$
\frac{1}{|M_\mathrm{ver}\cup M_\mathrm{draft}|} \sum_{v \in M_\mathrm{ver}\cup M_\mathrm{draft}} P_v(t_{v_j}^v \mid \mathbf{t^v}_{<v_j}) > \frac{1}{2}.
$$
Then $t_j$ is the token selected by the ensemble, i.e., $t_j = \arg\max_t P_\mathrm{ens}(t \mid \mathbf{t}_{<j})$, where $P_\mathrm{ens}$ is the ensemble distribution.
\end{thm}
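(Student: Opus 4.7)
\section*{Proof proposal for Theorem~\ref{theorem1}}

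The plan is to treat the two sufficient conditions separately, reducing each to a one-line observation about arithmetic means of probability distributions, after making precise what ``alignment across heterogeneous vocabularies'' means. Let $\mathcal{V}$ denote the union vocabulary used to define $P_\mathrm{ens}$, and for each model $v$ and each candidate ensemble token $t\in\mathcal{V}$, write $t^{(v)}$ for the token in $LLM_v$'s vocabulary whose surface string aligns with $t$ given the prefix $\mathbf{t^v}_{<v_j}$ (with $P_v(t^{(v)}\mid\mathbf{t^v}_{<v_j}):=0$ if no such alignment exists). Then, by definition of the average-based ensemble,
\begin{equation*}
P_\mathrm{ens}(t\mid\mathbf{t}_{<j})=\frac{1}{K}\sum_{v\in M_\mathrm{ver}\cup M_\mathrm{draft}}P_v\bigl(t^{(v)}\mid\mathbf{t^v}_{<v_j}\bigr),\qquad K=|M_\mathrm{ver}\cup M_\mathrm{draft}|,
\end{equation*}
and in particular $P_\mathrm{ens}(t_j\mid\mathbf{t}_{<j})=\frac{1}{K}\sum_v P_v(t_{v_j}^v\mid\mathbf{t^v}_{<v_j})$, since $t_{v_j}^v$ is precisely the alignment of $t_j$ in $LLM_v$'s tokenization. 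Note that summing over $t\in\mathcal{V}$ gives a quantity $\leq 1$ because each $P_v$ is a probability distribution and alignment maps distinct $t$ to distinct $t^{(v)}$.

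For condition~(ii), I would invoke a simple pigeonhole argument. If $P_\mathrm{ens}(t_j\mid\mathbf{t}_{<j})>\tfrac{1}{2}$, then for any other ensemble token $t'\neq t_j$ we have $P_\mathrm{ens}(t'\mid\mathbf{t}_{<j})\leq 1-P_\mathrm{ens}(t_j\mid\mathbf{t}_{<j})<\tfrac{1}{2}<P_\mathrm{ens}(t_j\mid\mathbf{t}_{<j})$, so $t_j=\arg\max_t P_\mathrm{ens}(t\mid\mathbf{t}_{<j})$.

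For condition~(i), I would use monotonicity of the average under pointwise comparison. By unanimity, $P_v(t_{v_j}^v\mid\mathbf{t^v}_{<v_j})\geq P_v(u\mid\mathbf{t^v}_{<v_j})$ for every $v$ and every token $u$ in $LLM_v$'s vocabulary; in particular, $P_v(t_{v_j}^v\mid\mathbf{t^v}_{<v_j})\geq P_v(t'^{(v)}\mid\mathbf{t^v}_{<v_j})$ for every alternative ensemble token $t'\in\mathcal{V}$. Averaging these pointwise inequalities across $v$ yields $P_\mathrm{ens}(t_j\mid\mathbf{t}_{<j})\geq P_\mathrm{ens}(t'\mid\mathbf{t}_{<j})$, so $t_j\in\arg\max_t P_\mathrm{ens}(t\mid\mathbf{t}_{<j})$, as desired. (A minor bookkeeping point: the theorem as stated quantifies over $M_\mathrm{ver}$, but in SAFE the drafter also greedily emits $t_j$ from its own distribution, so its contribution is already compatible with the same inequality.)

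The main obstacle is not any deep inequality but rather the notational one: cleanly setting up the alignment map $t\mapsto t^{(v)}$ so that the identity $P_\mathrm{ens}(t_j)=\tfrac{1}{K}\sum_v P_v(t_{v_j}^v)$ follows immediately, and so that comparisons $P_\mathrm{ens}(t_j)$ vs.\ $P_\mathrm{ens}(t')$ reduce termwise to the verifier-level inequalities. Once this alignment bookkeeping is in place, both cases collapse to the two observations above (pigeonhole for (ii), linearity/monotonicity of averages for (i)), and no further estimates are required.
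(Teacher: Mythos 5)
Your proposal is correct and follows essentially the same route as the paper's proof: monotonicity of the average under the unanimity hypothesis for case (i), and the pigeonhole fact that no two tokens of a probability distribution can both exceed $\tfrac{1}{2}$ for case (ii). Your explicit treatment of the alignment map and of the drafter's term in the average (which the paper's own proof leaves implicit when it averages over $M_\mathrm{ver}\cup M_\mathrm{draft}$ after assuming the argmax condition only for the verifiers) is, if anything, slightly more careful than the original.
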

\begin{proof}
Define the ensemble distribution as average of all models' distributions which is aligned with $M_\mathrm{draft}$'s tokenization
\[
P_\mathrm{ens}(t_j \mid \mathbf{t}_{<j})
\;\triangleq\;
\frac{1}{|M_\mathrm{ver}|+1}\sum_{v\in M_\mathrm{ver}\cup M_\mathrm{draft}} P_v\!\left(t^v_{v_j} \mid \mathbf{t^v}_{<v_j}\right).
\]

\paragraph{(i) Unanimous consensus across verifiers.}
If for all $v\in M_\mathrm{ver}$ we have
$
t^v_{v_j}=\arg\max_{t} P_v(t_j\mid \mathbf{t^v}_{<v_j}),
$
then for any token $u$,
\[
P_v(t^v_{v_j}\mid \mathbf{t^v}_{<v_j}) \;\ge\; P_v(u\mid \mathbf{t^v}_{<v_j})
\quad\text{for all } v.
\]
Since $t^v_{v_j}$ is aligned with $t_j$ for all $v$, averaging over $v$ preserves the inequality:
\[
P_\mathrm{ens}(t_j\mid \mathbf{t}_{<j})
=\frac{1}{|M_\mathrm{ver}|+1}\sum_v P_v(t^v_{v_j}\mid \mathbf{t^v}_{<v_j})
\;\ge\;
\frac{1}{|M_\mathrm{ver}|+1}\sum_v P_v(u\mid \mathbf{t^v}_{<v_j})
= P_\mathrm{ens}(u\mid \mathbf{t}_{<j}).
\]
Hence,
$
t_j=\arg\max_t P_{\mathrm{ens}}(t\mid \mathbf{t}_{<j}).
$

\paragraph{(ii) Average probability above one half.}
Assume
\[
P_\mathrm{ens}(t_j\mid \mathbf{t}_{<j})=\frac{1}{|M_\mathrm{ver}|+1}\sum_{v\in M_\mathrm{ver}\cup M_\mathrm{draft}} P_v(t^v_{v_j}\mid \mathbf{t^v}_{<v_j}) \;>\; \tfrac{1}{2}.
\]
Because $P_\mathrm{ens}(\cdot\mid \mathbf{t}_{<j})$ is a probability distribution,
$\sum_{t} P_\mathrm{ens}(t\mid \mathbf{t}_{<j}) = 1$.
Thus, no other token $u\neq t_j$ can have
$P_\mathrm{ens}(u\mid \mathbf{t}_{<j}) \ge P_\mathrm{ens}(t_j\mid \mathbf{t}_{<j})$,
since two distinct tokens cannot both exceed $1/2$.
Therefore,
$
t_j=\arg\max_t P_\mathrm{ens}(t\mid \mathbf{t}_{<j}).
$

In either case (i) or (ii), the ensemble selects $t_j$, proving the claim.
\end{proof}

\begin{figure}[h]
    \centering
    \includegraphics[width=\linewidth]{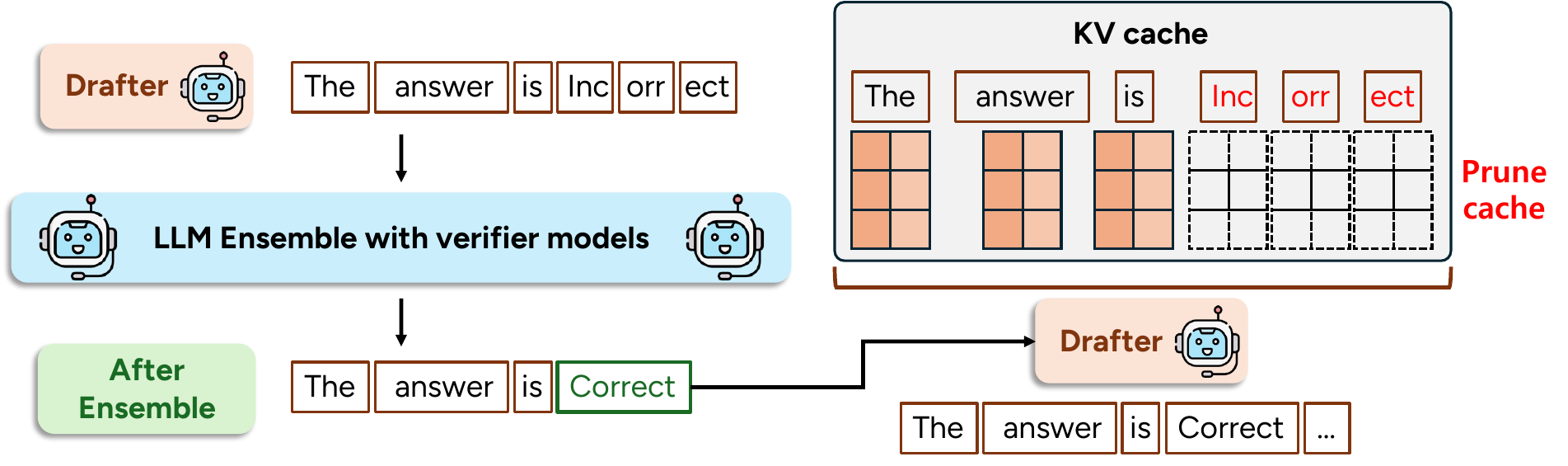}
    \caption{Our KV cache management. The cache is pruned to ensure alignment with the ensembled output.}
    \label{fig:kvcache}
\end{figure}

\section{Details of Our KV Cache Management}
\label{appendix:KV cache}
While KV caching is essential for generating long responses, it introduces a challenge in ensemble settings because the token selected by the ensemble may differ from the tokens generated by individual models. As illustrated in \Cref{fig:kvcache}, the drafter initially generates \texttt{The answer is Incorrect}, but after ensembling the output becomes \texttt{The answer is Correct}. In this case, the drafter's KV cache, which contains states for the discarded token \texttt{Incorrect}, must be updated. To resolve this inconsistency, we prune each participating model’s KV cache by a fixed buffer at the end of every ensemble step. This pruning ensures that each model's KV cache is consistent with the actual input sequences before producing the next token.

\section{Latency Comparison}
\label{appendix:latency}
We present additional latency comparisons on general-domain datasets, where the proportion of ensembled tokens is higher than in math datasets. \Cref{fig:appendix_latency} shows the latency of SAFE on MMLU-redux dataset. As illustrated, SAFE substantially reduces latency compared to existing ensemble methods and achieves speeds comparable to individual models when generating hundreds of tokens, even in general-domain tasks. These results highlight the practical applicability of SAFE, demonstrating that it enables efficient ensembling across diverse domains.

\begin{figure}[h]
  \centering
  % box of 90% text width × 0.28 text height
  \includegraphics[width=\linewidth]{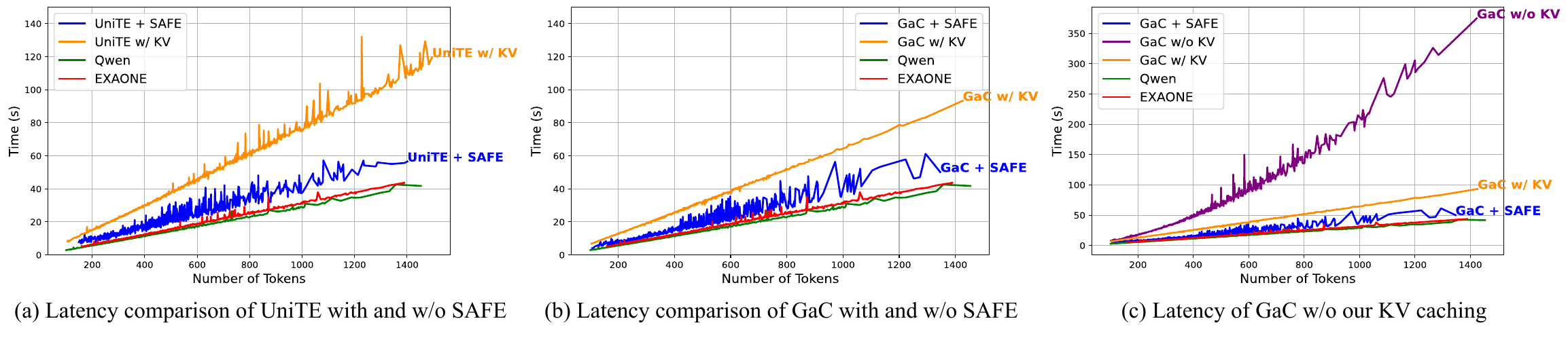}
  \caption{Latency comparison on MMLU-redux. SAFE significantly improves efficiency on general-domain tasks. Note that the time-axis scale in (c) differs from (a) and (b).}
  \label{fig:appendix_latency}
\end{figure}

\section{Ablation on the Drafter's Sequence Length}

\begin{figure}[h]
  \centering
  % box of 90% text width × 0.28 text height
  \includegraphics[width=0.5\linewidth]{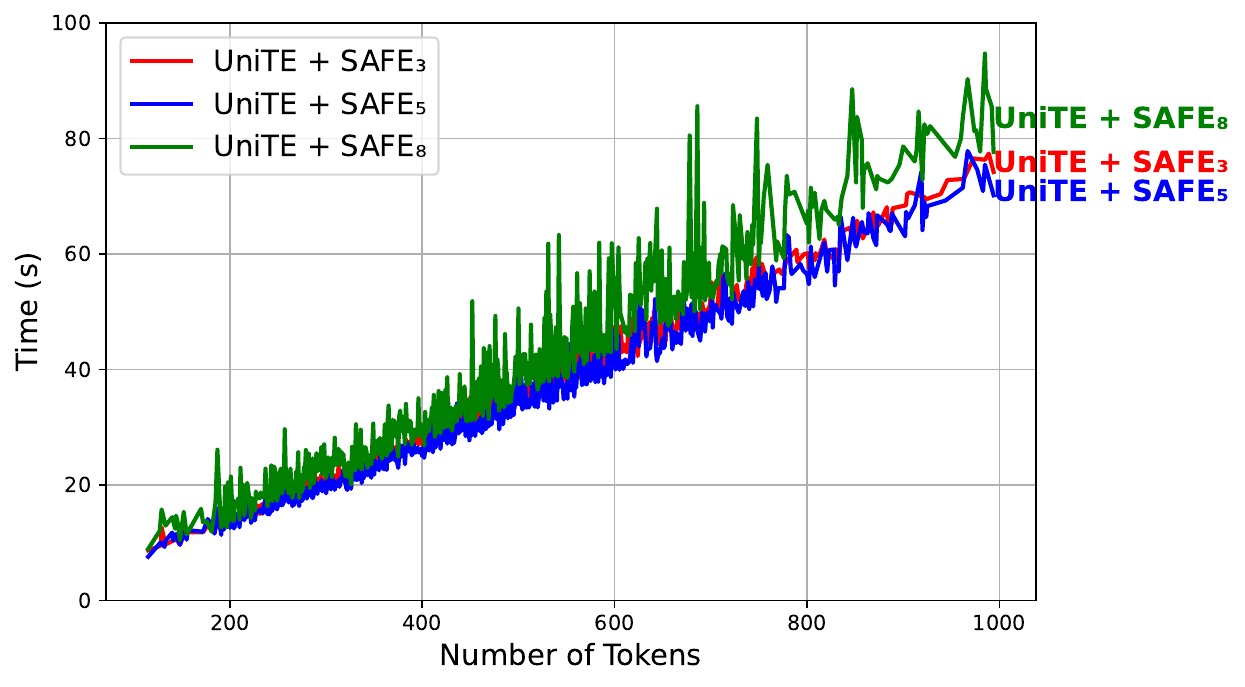}
  \caption{Latency comparison depending on the drafter's sequence length, where $n$ denotes the drafter's sequence length in SAFE$_n$. Generating a longer sequence reduces efficiency. We use MMLU-redux for the comparison.}
  \label{fig:appendix_latency_ablation}
\end{figure}

\section{Ensembling Larger Models}
\label{appendix:large models}
\Cref{tab:appendix_32b} presents the results for ensembling 32B-scale models: Qwen2.5-32B-Instruct~\citep{qwen2.5} and EXAONE-3.5-32B-Instruct~\citep{an2024exaone}, evaluated on MMLU-redux and MATH500. For MMLU-redux, we report two variants: MMLU-redux$^*$ and the full MMLU-redux. MMLU-redux$^*$ includes only 21 subjects, excluding 9 subjects where Qwen2.5-32B-Instruct largely outperforms EXAONE-3.5-32B-Instruct by more than 10\%, making ensembling less meaningful. As shown in the table, applying SAFE to existing ensemble methods consistently outperforms the baselines, demonstrating its effectiveness on larger-scale models. The 9 subjects excluded in MMLU-redux$^*$ are: college chemistry, college mathematics, college physics, formal logic, electrical engineering, high school chemistry, professional accounting, clinical knowledge, and econometrics.

\begin{table}[htbp]
    \centering
    \begin{tabular}{@{}lccc@{}}
    \toprule
    \multirow{2}{*}{{\textbf{Method}}} & MMLU-redux$^*$ & MMLU-redux & MATH500  \\
    \cmidrule{2-4}
      & Accuracy & Accuracy & Accuracy  \\
    \midrule
    Qwen2.5-32B & 85.06 & \textbf{84.54} & 80.8   \\
    EXAONE3.5-32B & 82.34 & 79.26 & 77.2 \\
    \cmidrule[1.5pt]{1-4}
    \multicolumn{4}{c}{\textit{Two-model ensembling (Qwen2.5 + EXAONE3.5)}} \\
    \cmidrule[1.5pt]{1-4}
    GaC & 84.70 & 82.73 & 80.4  \\
    \textbf{GaC + SAFE} & \textbf{85.11} & 83.79 &  \textbf{81.6} \\
    \bottomrule
  \end{tabular}
    \caption{Results of ensembling 32B-scale models.}
    \label{tab:appendix_32b}
\end{table}

\section{Relaxed Acceptance Threshold}
In the Ensemble Distribution Verification step, the acceptance threshold can be relaxed to support various scenarios, such as sampling or weighted combinations of probability distributions, rather than simple averaging. First, if the ensemble distribution is defined as a weighted sum of the models' next-token probability distributions, the second condition of Ensemble Distribution Verification becomes:
\begin{itemize}
    \item \textbf{(Average probability above one half)}\\
    If $\sum_{LLM_v\in M_\mathrm{ver}\cup M_\mathrm{draft}} w_v P_v(t_{v_j}^v\mid\mathbf{t^v}_{<v_j}) > \frac{1}{2}$, we skip ensembling at $t_j$,
\end{itemize}
where $w_v$ is the weight assigned to $LLM_v$ and $\sum_v w_v =1$.

If we wish to use sampling rather than greedy decoding, we can incorporate speculative sampling~\citep{leviathan2023fast} into our framework. In this case, we do not use the first condition (Unanimous consensus among verifiers), and the second condition is relaxed as follows:
\begin{itemize}
    \item (Average probability above one half) $\rightarrow$ \textbf{(Speculative sampling)}\\
     We skip ensembling at $t_j$ with probability $\min (1, \frac{P_{\mathrm{ens}}(t_j|\mathbf{t}_{<j})}{P_{\mathrm{draft}}(t_j|\mathbf{t}_{<j})})$. 
\end{itemize}
Similar to greedy decoding setup, $P_{\mathrm{ens}}(t_j|\mathbf{t}_{<j})$ is derived as:
\begin{align*}
    P_{\text{ens}}(t_{j}|\mathbf{t}_{<j})=\frac{1}{|M_\mathrm{ver}\cup M_\mathrm{draft}|}\sum_{LLM_v\in M_\mathrm{ver}\cup M_\mathrm{draft}} P_v(t_{v_j}^v\mid\mathbf{t^v}_{<v_j}),
\end{align*}
where $t_{v_j}^v$ is the token corresponding to $t_j$ under $LLM_v$'s tokenization.

\section{Further Discussion of OOV-like Tokens}
In this section, we provide a more detailed explanation about OOV-like tokens, which we use to determine whether tokenization mismatch occurs. OOV-like tokens are defined as tokens that can corrupt the next-token probability distribution of a participating model due to differences in tokenization across models. As described in \Cref{subsec:methods-verify}, whether a drafter token is classified as OOV-like is determined by checking whether the token boundary up to that token aligns with the tokenization boundaries of the verifiers.

One similar concept to OOV-like token is non-canonical tokenization, which refers to any tokenization other than the canonical tokenization for a given model. Several studies~\citep{cao-rimell-2021-evaluate,geh2024signal,chatzi2025canonical,pmlr-v267-vieira25b} have explored the use of non-canonical tokenizations in generation within a single LLM. However, we highlight two important distinctions between OOV-like tokens and non-canonical tokenizations. First, OOV-like tokens are defined strictly at the token level. For example, consider the sequence “Hello, world”. If (\texttt{Hello}, \texttt{\_world}) is the canonical tokenization, then alternative tokenizations such as (\texttt{Hell}, \texttt{o}, \texttt{\_world}) or (\texttt{Hello}, \texttt{\_w}, \texttt{orld}) would be considered non-canonical tokenizations. However, we do not treat an entire tokenization like (\texttt{Hell}, \texttt{o}, \texttt{\_world}) as an OOV-like case. Instead, we identify only the specific mismatched tokens within such non-canonical tokenizations, such as \texttt{Hell} or \texttt{\_w}, as OOV-like tokens. Second, OOV-like tokens consider only the tokenizations used by the participating models, whereas non-canonical tokenization is a broader concept that encompasses all possible tokenizations of a sequence. For example, if the participating models tokenize “Hello” in only one way, (\texttt{Hello}), then no OOV-like token exists. In contrast, non-canonical tokenization includes any tokenization that can compose Hello, such as (\texttt{H}, \texttt{e}, \texttt{l}, \texttt{l}, \texttt{o}) or (\texttt{He}, \texttt{ll}, \texttt{o}), regardless of whether it is produced by the models. These distinctions directly relate to how we handle tokenization mismatch. We only assess mismatch at the token level, using only tokenizations of participating models. These two distinctions clearly represent differences from prior work on non-canonical tokenization.

\section{Comparison with Methods that Ensembling at a Span-Level}
We discuss how our method differs from methods that ensemble at a larger-granularity, such as span-level ensembling, in addressing tokenization mismatch. Recently, several approaches~\citep{xu-etal-2025-hit, liu2025cool} have proposed ensembling at a larger granularity, such as a set of words, to bypass the tokenization mismatch problem when ensembling LLMs that use heterogeneous tokenizers. These methods avoid tokenization mismatch by not performing ensembling at the token level and instead ensemble only at the span level. As a result, ensembling cannot occur inside a span, thereby sacrificing token-level probability information. In contrast, our method addresses tokenization mismatch within token-level ensembling. Our method resolves OOV-like issues by identifying token-level positions in the drafter’s tokens where no mismatch occurs, and only these positions are considered valid candidates for ensembling. This allows ensembling to take place within a span while avoiding OOV-like issues, providing greater flexibility in where ensembling can occur.

\section{Qualitative Results}
\label{appendix:qual}
In this section, we present qualitative results of SAFE.  
\Cref{tab:appendix_qual1,tab:appendix_qual2,tab:appendix_qual3,tab:appendix_qual4} compare responses from existing ensemble methods with those obtained when SAFE is applied. As shown in \Cref{tab:appendix_qual1}, SAFE can successfully yield the correct solution even when individual models fail to produce the correct answer or reasoning. \Cref{tab:appendix_qual1,tab:appendix_qual4} further illustrate how existing ensemble methods suffer from repeated generation of erroneous tokens due to the introduction of OOV-like tokens. Finally, \Cref{tab:appendix_qual2,tab:appendix_qual3} highlight cases where existing ensemble methods generate incorrect reasoning that leads to wrong answers. 

\section{LLM Usage}
We used the LLM solely for the purpose of polishing the writing.

\begin{table}
    \centering
    \begin{tabularx}{\linewidth}{p{1.7cm}|X}
    \toprule
       Question  & Sofia and Tess will each randomly choose one of the 10 integers from 1 to 10. What is the probability that neither integer chosen will be the square of the other? (A) 0.64 (B) 0.72 (C) 0.81 (D) 0.95 \\
       \midrule
        Answer & (D) 0.95  \\
        \midrule
        Internlm3-8B-Instruct & To determine the probability that neither integer chosen by Sofia and Tess will be the square of the other, we can follow these steps: 1. Identify the squares of the integers from 1 to 10: The squares of the integers from 1 to 10 are:  $[1^2 = 1, 2^2 = 4, 3^2 = 9]$ So, the squares are 1, 4, and 9. 2. Determine the total number of possible outcomes: Each of Sofia and Tess can choose any of the 10 integers, so the total number of possible outcomes is: [10 $\times$ 10 = 100] 3. Count the number of outcomes where one integer is the square of the other: We need to count the pairs ((a, b)) where (a) is the square of (b) or (b) is the square of (a). The pairs are: $[(1, 1), (1, 4), (1, 9), (4, 1), (4, 4), (4, 9), (9, 1), (9, 4), (9, 9)]$ There are 9 such pairs. 4. Calculate the number of outcomes where neither integer is the square of the other: $[100 - 9 = 91]$ 5. Calculate the probability:  $[\frac{91}{100}= 0.91]$ However, none of the provided answer choices match 0.91. ... \textcolor{red}{The answer is C.} \\
        \midrule
        Qwen2.5-7B-Instruct & To solve this problem, we need to determine the probability that neither of the two chosen integers is the square of the other. First, let's identify the squares of integers from 1 to 10: $(1^2 = 1),(2^2 = 4),(3^2 = 9)$ So, the only pairs of numbers in the range from 1 to 10 where one number is the square of the other are:(1, 1), (4, 2), (4, 4), (9, 3), (9, 9) There are 5 such pairs. Since each person can independently choose any of the 10 integers, there are a total of $(10 \times 10 = 100)$ possible outcomes when both Sofia and Tess make their choices. Next, we calculate the number of outcomes where at least one of the chosen integers is the square of the other. This includes the 5 pairs listed above and their reverses (since if (4, 2) is a pair, so is (2, 4)). However, since (1, 1), (4, 4), and (9, 9) are symmetric, they are already counted once. Therefore, we have:(1, 1) - 1 way, (4, 2) and (2, 4) - 2 ways, (4, 4) - 1 way, (9, 3) and (3, 9) - 2 ways, (9, 9) - 1 way. This gives us a total of $(1 + 2 + 1 + 2 + 1 = 7)$ pairs where one number is the square of the other. Thus, the number of outcomes where neither integer is the square of the other is $(100 - 7 = 93)$. The probability that neither integer chosen will be the square of the other is therefore: $[\frac{93}{100} = 0.93]$ However, this does not match any of the provided options exactly. ... \textcolor{red}{Given the options, the closest match to our calculated probability of 0.93 is 0.95.} The answer is (D). \\
        \midrule
        UniTE & To determine the probability that neither integer chosen by SofÃa and Tess will be the square of the other, we need to follow these steps: 1. Identify the squares of the integers from 1 to 10: The squares of the integers from 1 to 10 are: $[1^2 = 1, 2^2 = 4, 3^2 = 9]$ So, the squares are 1, 4, and 9. 2. Determine the total number of possible outcomes: \textcolor{red}{Each of SofÃÃÃÃÃ...} \\
        \midrule
        UniTE + \textbf{SAFE} & To determine the probability that neither integer chosen by Sofia and Tess will be the square of the other, we can follow these steps: 1. Identify the squares of the integers from 1 to 10: The squares of the integers from 1 to 10 are: $[1^2 = 1, 2^2 = 4, 3^2 = 9]$ So, the squares are 1, 4, and 9. 2. Determine the total number of possible outcomes: Each of Sofia and Tess can choose any of the 10 integers. Therefore, the total number of possible outcomes is: $[10 \times 10 = 100]$ 3. Count the number of outcomes where one integer is the square of the other: We need to count the pairs ((a, b)) where (a) is the square of (b) or (b) is the square of (a). The pairs are: $[(1, 1), (4, 2), (2, 4), (9, 3), (3, 9)]$ There are 5 such pairs. 4. Calculate the number of outcomes where neither integer is the square of the other: $[100 - 5 = 95]$ 5. Calculate the probability: $[\frac{95}{100} = 0.95]$ \textcolor{blue}{Therefore, the probability that neither integer chosen will be the square of the other is D.} \\
        \bottomrule
    \end{tabularx}
    \caption{Response comparisons of UniTE and UniTE + \textbf{SAFE}. Some verbose outputs are omitted due to space constraints.}
    \label{tab:appendix_qual1}
\end{table}

\begin{table}
    \centering
    \begin{tabularx}{\linewidth}{p{1.7cm}|X}
    \toprule
       Question  & Terry eats 2 yogurts a day.  They are currently on sale at 4 yogurts for \$5.00.  How much does he spend on yogurt over 30 days? \\
       \midrule
        Answer & 75  \\
        \midrule
        Qwen2.5-7B-Instruct & Let's break down the problem step by step to find out how much Terry spends on yogurt over 30 days. 1. Determine the number of yogurts Terry eats in 30 days: Terry eats 2 yogurts per day. Over 30 days, he will eat $(2 \times 30 = 60)$ yogurts. 2. Calculate how many sets of 4 yogurts are in 60 yogurts: Since the yogurts are sold in sets of 4 for \$5.00, we need to find out how many such sets are in 60 yogurts.  The number of sets is $(\frac{60}{4} = 15 )$. 3. Calculate the total cost: Each set of 4 yogurts costs \$5.00. For 15 sets, the total cost is $( 15 \times 5 = 75)$ dollars. Therefore, the amount Terry spends on yogurt over 30 days is \textcolor{blue}{$\boxed{75}$} \\
        \midrule
        EXAONE-3.5-7.8B-Instruct & To determine how much Terry spends on yogurt over 30 days, let's break down the problem step by step: 1. Calculate Daily Yogurt Consumption: - Terry eats 2 yogurts per day. 2. Determine Total Yogurt Consumption Over 30 Days: Over 30 days, Terry will consume $(2 \text{ yogurts/day} \times 30 \text{ days} = 60 \text{ yogurts})$. 3. Understand the Sale Price: The yogurts are sold at a rate of 4 yogurts for \$5.00. 4. Calculate the Cost Per Yogurt: The cost per yogurt is $(\frac{5.00 \text{ dollars}}{4 \text{ yogurts}} = 1.25 \text{ dollars/yogurt})$. 5. Calculate Total Cost for 60 Yogurts: If each yogurt costs \$1.25, then for 60 yogurts, the total cost will be $( 60 \text{ yogurts} \times 1.25 \text{ dollars/yogurt} = 75 \text{ dollars} )$. Therefore, the total amount Terry spends on yogurt over 30 days is \textcolor{blue}{$\boxed{75}$} \\
        \midrule
        UniTE & To determine how much Terry spends on yogurt over 30 days, we can break down the problem into several steps: 1. Calculate the daily yogurt consumption in terms of the sale unit: Terry eats 2 yogurts per day. The sale is 4 yogurts for \$5.00. Therefore, Terry consumes $(\frac{2}{4} = 0.5)$ of a sale unit per day. 2. Determine the daily cost based on the sale price: Since 4 yogurts cost \$5.00, 1 yogurt costs $(\frac{5.00}{4} = 1.25)$. - Therefore, 0.5 of a sale unit (which is equivalent to 2 yogurts) costs $(0.5 \times 1.25 = 0.625)$. So, Terry spends \$0.625 per day on yogurt. 3. Calculate the total cost over 30 days: Terry spends \$0.625 per day. Over 30 days, the total cost is $(0.625 \times 30 = 18.75)$. Therefore, the total amount Terry spends on yogurt over 30 days is \textcolor{red}{$\boxed{18.75}$} \\
        \midrule
        UniTE + \textbf{SAFE} &To determine how much Terry spends on yogurt over 30 days, we can break down the problem into several steps: 1. Calculate the daily yogurt consumption cost:  Terry eats 2 yogurts per day. The sale price is 4 yogurts for \$5.00. To find the cost per yogurt, we divide the total cost by the number of yogurts: $\text{Cost per yogurt} = \frac{5.00}{4} = 1.25 \text{ dollars per yogurt}$. Since Terry eats 2 yogurts per day, the daily cost is: $\text{Daily cost} = 2 \times 1.25 = 2.50 \text{ dollars}$ 2. Calculate the total cost over 30 days: Terry spends \$2.50 per day on yogurt. Over 30 days, the total cost is: $\text{Total cost} = 2.50 \times 30 = 75.00 \text{ dollars}$ Therefore, the total amount Terry spends on yogurt over 30 days is \textcolor{blue}{$\boxed{75.00}$}  \\
        \bottomrule
    \end{tabularx}
    \caption{Response comparisons of UniTE and UniTE + \textbf{SAFE}.}
    \label{tab:appendix_qual2}
\end{table}

\begin{table}
    \centering
    \begin{tabularx}{\linewidth}{p{1.7cm}|X}
    \toprule
       Question  & Compute $99^2+99+1$ in your head. \\
       \midrule
        Answer & 9901  \\
        \midrule
        Internlm3-8B-Instruct & To compute $99^2 + 99 + 1$ in my head, I can use algebraic manipulation to simplify the expression. Let's start by rewriting the expression in a more convenient form. First, let's denote $x = 99$. Then the expression becomes: $x^2 + x + 1.$ Notice that this expression can be rewritten using the identity for the square of a binomial: $x^2 + x + 1 = (x + 1)^2 - x.$ Substituting $x = 99$ back into the expression, we get: $99^2 + 99 + 1 = (99 + 1)^2 - 99.$ Simplifying inside the parentheses, we have: $99 + 1 = 100.$ So the expression becomes: $100^2 - 99.$ Now, we know that $100^2 = 10000$, so the expression simplifies to: $10000 - 99 = 9901.$ Therefore, the value of $99^2 + 99 + 1$ is \textcolor{blue}{$\boxed{9901}$} \\
        \midrule
        Qwen2.5-7B-Instruct & To compute $99^2 + 99 + 1$ in my head, I will break it down into simpler steps. First, let's calculate $99^2$:  $[99^2 = (100 - 1)^2 = 100^2 - 2 \cdot 100 \cdot 1 + 1^2 = 10000 - 200 + 1 = 9801.]$ Next, we add 99 to the result: $9801 + 99 = 9900.$ Finally, we add 1 to the result: $9900 + 1 = 9901.$ So, the final answer is \textcolor{blue}{$\boxed{9901}$} \\
        \midrule
        GaC & To compute $99^2 + 99 + 1$ in my head, I can use algebraic manipulation to simplify the expression. Let's start by rewriting the expression in a more convenient form. First, let's denote $x = 99$. Then the expression becomes: $x^2 + x + 1.$ Now, we need to compute $x^2 + x + 1$ for $x = 99$. Notice that $x^2 + x + 1$ can be rewritten using the identity for the sum of cubes: $x^2 + x + 1 = (x + 1)^2 - x.$ However, a simpler approach is to recognize that $x^2 + x + 1$ can be expressed as: $x^2 + x + 1 = (x + 1)^2 - x + x = (x + 1)^2.$ So, we have: $99^2 + 99 + 1 = (99 + 1)^2 = 100^2.$ Now, we just need to compute $100^2$: $100^2 = 10000.$ Therefore, the value of $99^2 + 99 + 1$ is \textcolor{red}{$\boxed{10000}$} \\
        \midrule
        GaC + \textbf{SAFE} & To compute $99^2 + 99 + 1$ in my head, I can use algebraic manipulation to simplify the expression. Let's start by rewriting the expression in a more convenient form. First, let's denote $x = 99$. Then the expression becomes: $x^2 + x + 1.$ Now, we can use the identity for the square of a binomial to rewrite $x^2 + x + 1$. Notice that: $x^2 + x + 1 = (x + 1)^2 - x.$ Substituting $x = 99$ back into the expression, we get:$99^2 + 99 + 1 = (99 + 1)^2 - 99.$ Simplifying inside the parentheses, we have: $99 + 1 = 100.$ So the expression becomes: $100^2 - 99.$ Next, we compute $100^2$: $100^2 = 10000.$ Now, we subtract 99 from 10000: $10000 - 99 = 9901.$ Therefore, the value of $99^2 + 99 + 1$ is \textcolor{blue}{$\boxed{9901}$} \\
        \bottomrule
    \end{tabularx}
    \caption{Response comparisons of GaC and GaC + \textbf{SAFE}.}
    \label{tab:appendix_qual3}
\end{table}

\begin{table}
    \centering
    \begin{tabularx}{\linewidth}{p{1.7cm}|X}
    \toprule
       Question  & At 25°C, the maximum amount of $\text{PbI}_2$ that can be dissolved in 1.00 L of pure water is 1.0 mmol. Assuming complete dissociation, the solubility product, $K_{sp}$, for lead iodide at 25°C is (A) $1.0 \times 10^{-3}$, (B) $1.0 \times 10^{-6}$, (C) $1.0 \times 10^{-9}$, (D) $4.0 \times 10^{-9}$ \\
       \midrule
        Answer & (D)  \\
        \midrule
        Qwen2.5-7B-Instruct & To solve this problem, we need to understand the relationship between the solubility of a compound and its solubility product constant $K_{sp}$. 1. Dissociation of Lead Iodide PbI$_2$: $\text{PbI}_2(s) \rightleftharpoons \text{Pb}^{2+}(aq) + 2\text{I}^-(aq)$ 2. Solubility in Water: Given that 1.0 mmol (or $1.0 \times 10^-3 mol$) of PbI$_2$ dissolves in 1.00 L of pure water, we can determine the concentrations of the ions: The concentration of $\text{Pb}^{2+}$ is $1.0 \times 10^{-3} \text{M}$. The concentration of $\text{I}^-$ is $2 \times 1.0 \times 10^{-3} = 2.0 \times 10^{-3} \text{M}$. 3. Solubility Product Constant $K_{sp}$: The expression for the solubility product constant $K_{sp}$ for PbI$_2$ is: $K_{sp} = [\text{Pb}^{2+}][\text{I}^-]^2$ Substituting the concentrations: $K_{sp} = (1.0 \times 10^{-3})(2.0 \times 10^{-3})^2 = (1.0 \times 10^{-3})(4.0 \times 10^{-6}) = 4.0 \times 10^{-9}$ Therefore, the solubility product, $K_{sp}$, for lead iodide at 25°C is $4.0 \times 10^{-9}$. \textcolor{blue}{The answer is D.} \\
        \midrule
        EXAONE-3.5-7.8B-Instruct & To determine the solubility product constant $K_{sp}$ for lead iodide PbI$_2$, we need to follow these steps: 1. Understand the Dissociation Reaction:   Lead iodide dissociates in water as follows: $\text{PbI}_2 \rightarrow \text{Pb}^{2+} + 2\text{I}^-$ 2. Determine the Concentrations: Given that the maximum amount of PbI$_2$ that can dissolve in 1.00 L of water is 1.0 mmol, we can find the molar concentrations of Pb$^{2+}$ and I$^-$ ions produced. Since 1 mmol of PbI$_2$ dissociates into 1 mmol of Pb$^{2+}$ and 2 mmol of I$^-$: $[\text{Pb}^{2+}] = \frac{1.0 \text{ mmol}}{1.0 \text{ L}} = 1.0 \text{ mmol/L} = 1.0 \times 10^{-3} \text{ M}$, $[\text{I}^-] = \frac{2 \times 1.0 \text{ mmol}}{1.0 \text{ L}} = 2.0 \text{ mmol/L} = 2.0 \times 10^{-3} \text{ M}$ 3. Write the Expression for $K_{sp}$: The solubility product constant expression for PbI$_2$ is: $K_{sp} = [\text{Pb}^{2+}] \cdot [\text{I}^-]^2$ 4. Substitute the Concentrations into the $K_{sp}$ Expression: $K_{sp} = (1.0 \times 10^{-3}) \cdot (2.0 \times 10^{-3})^2 = (1.0 \times 10^{-3}) \cdot (4.0 \times 10^{-6}) = 4.0 \times 10^{-9}$ Therefore, the solubility product constant $K_{sp}$ for lead iodide PbI$_2$ at 25°C is $4.0 \times 10^{-9}$. \textcolor{blue}{The answer is (D).} \\
        \midrule
        GaC & To solve this problem, we need to understand the relationship between the solubility product constant $K_{sp}$ and the solubility of a compound. 1. Identifing the Dissociation Equation: Lead(II) iodide PbI$_2$ dissociates in water as follows: $   \text{PbI}_2(s) \rightleftharpoons \text{Pb}^{2+}(aq) + 2\text{I}^-(aq)$ 2. Expressing Solubility in Terms of Concentrations: Let $s$ be the molar solubility of PbI$_2$. This means that when PbI$_2$ dissolves, it produces $s$ moles of \textcolor{red}{PbÂ$^2$âģº} per liter of solution and $2s$ moles of \textcolor{red}{Iâââââ...} \\
        \midrule
        GaC + \textbf{SAFE} & To solve this problem, we need to understand the relationship between the solubility product constant $K_{sp}$ and the solubility of a compound. For lead iodide $\text{PbI}_2$, the dissociation in water can be represented as: $\text{PbI}_2 \rightleftharpoons \text{Pb}^{2+} + 2\text{I}^-$ Given that the maximum amount of $\text{PbI}_2$ that can be dissolved in 1.00 L of pure water is 1.0 mmol (which is $1.0 \times 10^{-3} \text{mol}$, we can determine the concentrations of the ions in solution. 1. Determine the concentration of $\text{Pb}^{2+}$ ions: Since 1.0 mmol of $\text{PbI}_2$ dissociates into 1.0 mmol of $\text{Pb}^{2+}$ ions, the concentration of $\text{Pb}^{2+}$ ions is: $[\text{Pb}^{2+}] = 1.0 \times 10^{-3} \text{M}$ 2. Determine the concentration of I$^-$ ions: Each mole of $\text{PbI}_2$ dissociates into 2 moles of I$^-$ ions. Therefore, the concentration of I$^-$ ions is: $[\text{I}^-] = 2 \times 1.0 \times 10^{-3} \text{M} = 2.0 \times 10^{-3} \text{M}$ 3. Calculate the solubility product constant $K_{sp}$: The expression for $K_{sp}$ for $\text{PbI}_2$ is: $K_{sp} = [\text{Pb}^{2+}][\text{I}^-]^2$. Substituting the concentrations we found: $K_{sp} = (1.0 \times 10^{-3})(2.0 \times 10^{-3})^2 = (1.0 \times 10^{-3})(4.0 \times 10^{-6}) = 4.0 \times 10^{-9}$ Therefore, the solubility product, $K_{sp}$, for lead iodide at 25°C is $4.0 \times 10^{-9}$. \textcolor{blue}{The answer is D.} \\
        \bottomrule
    \end{tabularx}
    \caption{Response comparisons of GaC and GaC + \textbf{SAFE}. Some verbose outputs are omitted due to space constraints.}
    \label{tab:appendix_qual4}
\end{table}

\end{document}